\renewcommand{\paragraph}[1]{{\noindent \bf #1}}
\newcommand{\proba}{\mathbb{P}}
\newcommand{\liva}[1]{\textcolor{blue}{#1}}
\newtheorem{proposition}{Proposition}[section]
\newtheorem{corollary}{Corollary}[section]
\newtheorem{definition}[proposition]{Definition}
\newtheorem{lemma}[proposition]{Lemma}
\newtheorem{remark}[proposition]{Remark}
\newtheorem{assumption}{Assumption}
\newtheorem{theorem}{Theorem}
\newcommand{\Esp}{\mathbb{E}}
\newcommand{\C}{\mathcal{C}}
\renewcommand{\S}{\mathcal{S}}
\newcommand{\Sh}{\widehat{\mathcal{S}}}
\newcommand{\pareto}{\mathcal{P}}
\newcommand{\paretoh}{\widehat{\mathcal{P}}}
\newcommand{\width}{\text{\textbf{width}}}
\newcommand{\dal}{\mathcal{A}}
\renewcommand{\O}{\mathcal{O}}
\newcommand {\ep} {\varepsilon}
 \newcommand{\succe}{\succcurlyeq}
\newcommand{\unchain}{ {\tt Unchained Bandit}}
\newcommand{\algo}{ {\tt UBS Routine}\xspace}
\newcommand{\algochain}{ {\small\sf Sli\-cing\-Ban\-dits}\xspace}
\newcommand{\algopeeling}{ {\small\sf Un\-chai\-ned\-Ban\-dits}\xspace}
\newcommand{\pareps}{\parallel_\ep}
\newcommand{\reg}{\mathcal{R}}
\title{Decoy Bandits Dueling on a Poset}
\author{
Julien Audiffren \\
CMLA, ENS Cachan, CNRS\\
Paris Saclay University\\
94235 Cachan France \\
\texttt{audiffren@cmla.ens-cachan.fr} \\
\And
Liva Ralaivola \\
{\sc Qarma}, LIF, CNRS\\
Aix Marseille University \\
F-13289 Marseille cedex 9, France \\
\texttt{liva.ralaivola@lif.univ-mrs.fr}\\
}
\begin{document}

\maketitle

\begin{abstract}
 We adress the problem of dueling bandits defined on partially ordered sets, or posets. 
 In this setting, arms may not be comparable, and there may be several (incomparable) optimal arms. 
We propose an algorithm, \algopeeling, that efficiently finds the set of optimal arms  of any poset
 even when pairs of comparable arms cannot be {\em distinguished} from pairs of incomparable arms, 
 with a set of minimal assumptions. 
 This algorithm relies on the concept of decoys, which stems from social psychology. 
 For the easier case where the incomparability information may be accessible, we propose a second algorithm,
  \algochain, which takes advantage of this information and achieves a very significant gain of performance
 compared to \algopeeling.  We provide theoretical guarantees and experimental evaluation for both algorithms.
 \end{abstract}


\section{Introduction}

\paragraph{Chasing the optimal set for cold-start recommendation.}
Today's recommendation systems heavily rely on machine learning. Dedicated
techniques may indeed be designed to extract statistical regularities from a
set of collected behaviors and provide users with relevant recommendations. 
%
One of the main challenges a recommendation system has to deal with is \textit{cold-start}, i.e. the 
situation where recommendations must be computed for a user for whom no information has been
collected.
A common strategy to get around this problem is to have at hand a set of default items to
recommend to any new customer. The design of such a set is then paramount to 
the user experience with the recommendation system and to his willingness to rely
on it for future movie suggestions. A natural goal is therefore to try to form
 a set of {\it best} movies.
Identifying the best movies is a task that requires a proper handling of two
features: a) the variety of existing film genres (documentary, drama, comedy\ldots) and b)  the uncertainty with which one film may be considered better than another.  The variety
of genres induces the issue of incomparability: there are pairs of movies ---comparison
of pairs is evidently at the core of the best movie selection
process--- that {\em cannot} be compared such as, e.g., a documentary
and a horror movie. This means
that movies are only {\em partially ordered} and it suggests
that the {\em set} of best movies {\em must} contain incomparable movies. Said
 otherwise, each movie from the set is the best in its category.
The uncertainty issue mentioned above then arises within a 
single genre as it might be complex to assert that a film is better than
another. A way to bypass this difficulty is to rely on a committee of critics
and to aggregate the (noisy) opinions of its members on pairs of comparable
movies. This might be implemented as follows:
for each pair of films, committee members are chosen at random and asked which of
the two movies is the better, and the movie that wins the most among the random
probes is decided to be the best. 
This introduction provides a practical motivation for the present paper
where we study the question of deriving strategies for {\em dueling bandits}
defined on {\em partially ordered sets}, or {\em posets}. We are in particular
interested in being able to find the set of best arms among all the (possibly incomparable)
arms at hand.

 \paragraph{Dueling Bandits on Posets.} 
 Dueling bandits were introduced by  \citet{yue2012if}. The setting, pertaining to the $K$-armed bandit framework,
 assumes there is no direct access to the reward provided by any single arm and the only information that can
 be gained is through the simultaneous pull of two arms: when such a pull is performed the agent gets
 access to the winner of the two arms,
 thus the name
 of {\em dueling} bandits. Here, we extend the framework of dueling bandits to the situation where 
 there exist pairs of arms that are not comparable, that is, we study the case where there might be no 
 natural order that could help decide the winner of a duel between two arms. A problem induced
 by such a framework is then to identify among the set of all available $K$ arms the set of {\em maximal} 
 incomparable arms,  or the {\em Pareto front}, through a minimal number of pairwise pulls.
 To carry out our study,
 we propose to make use of tools from the theory of posets and we take
 inspiration from works dedicated to selection and sorting on posets \cite{daskalakis2011sorting}.


\paragraph{Keys: Indistinguishability and Decoys.} We make the assumption that the underlying poset or, more precisely, 
 the incomparability structure, is not known. A pivotal issue that we have to face in this case is that of {\em indistinguishability}.
In the bandit setting we assume, the draw of two arms that are comparable and that have 
close values ---and hence a probability for either arm to win a duel close to 0.5--- 
is essentially driven by the same random process, i.e. an unbiased coin flip,
 as the draw of two arms that are not comparable.  Hence, if we denote by $\ep$ the distances between those two processes, we can have $\ep$ arbitrary small, and thus this pairs of arms cannot be distinguished from  an incomparable pair of arm on the sole basis 
of pulls and a well-thought strategy. Such pair of arm will be referred as $\ep$-indistinguishable. 
This problem has led us to make use of {\em decoy} arms. The idea of decoy originates from social
 psychology, and was originally intended to 'force' an agent (e.g., a customer) towards a specific
  good/action (e.g. a product) by presenting her a choice between the targetted good
  and a degraded version of it. Here, we use decoys to help solve the
 problem of indistinguishability 

\paragraph{Contributions.}  Our main contribution,  the \algopeeling algorithm, implements a 
strategy based on decoys and a peeling approach that finds the Pareto front\footnote{
We  discuss in the supplementary material
the easier setting where the incomparability information is known and we provide a dedicated algorithm, \algochain, that takes advantage of this addiditional information.} of a poset
 $\S$ with probability at least 1$- \delta$ after at most 
$T \le \O\left( K \frac{\width(\S)}{\Delta^2} {\log(NK^2/\delta) }\right)$ pairwise pulls, while incurring a regret upper bounded by 
$$\reg \le \frac{2K}{\gamma^2} \log\left( \frac{2N K^2}{\delta}\right)\sum_{i=1}^K \frac{1}{\Delta_i}   C_{\alpha,\gamma}(N_i) + K  \width(S) \log\left( \frac{2N K^2}{\delta}\right)\sum_{i, \Delta_i < \ep_{N-1}, i \notin \pareto}  \frac{1}{\Delta_i},$$
where $\Delta$ is the parameter of the decoys, $\Delta_i$ the regret associated with arm $i$, $K$ is the size of the poset, $\width(\S)$ its {\em width}, $N$ is the number of peeling iterations, $\gamma$ is the peeling rate, $\ep_{N-1}$ is the maximum peeling and $C_{\alpha,\gamma}(N_i) \le 1 $ defined in Section \ref{sec:Contribution} encodes the complexity of the poset with respect to arm $i$.


The paper is organized as follows. Section \ref{sec:background} presents the setting of dueling bandits on posets
and formally states the problem we address. 
 In Section \ref{sec:Contribution}, we formally introduce the notion of decoys and show how they can be
 constructed, both mathematically and practically, we then present our algorithm, \algopeeling, which relies on decoys,
  to find
the {\em exact} Pareto front of the poset and we provide theoretical
guarantees on their performances. In Section \ref{sec:relatedworks}, 
we discuss how the present work relates to recent papers from the dueling bandits
 literature. Section \ref{sec:experiments} reports results on the empirical performances of our 
algorithm in different settings.

\section{Problem Statement: Dueling Bandits on Posets}
\label{sec:background}

\subsection{Reminders on Posets}

We here recall base notions and properties about posets that are relevant
to the present contribution.

\begin{definition}[Poset]
Let $\S$ be a set of elements. {$(\S,\succcurlyeq)$} is a {\em partially ordered} or {\em poset} if $\succcurlyeq$ is a reflexive, antisymmetric and transitive binary relation on $\S$: $\forall a,b,c\in\S$
\begin{itemize}
	\item $a\succcurlyeq a$ (reflexivity);
	\item if $a\succcurlyeq b$ and $b\succcurlyeq a$ then $a=b$ (antisymmetry);
	\item if $a\succcurlyeq b$ and $b\succcurlyeq c$ then $a\succcurlyeq c$ (transitivity).
\end{itemize}
\end{definition}

\begin{remark}
	In the following, we will use $\S$ to denote indifferently the set $\S$ or the poset $(\S,\succcurlyeq)$, the distinction being clear from the context.
	We make use of the additional notation: $\forall a,b\in\S$
	\begin{itemize}
		\item $a\parallel b$ if $a$ and $b$ are {\em incomparable} (i.e. neither $a\succe b$ nor $b\succe a$);
		\item $a\succ b$ if $a\succcurlyeq b$ and $a\neq b$;
	\end{itemize}
	Throughout, we limit our study to {\em finite} posets, i.e., posets such that $|\S|<+\infty.$
\end{remark}


\begin{definition}[Maximal element and Pareto front]
 An element $a\in \S$ is said to be a {\em maximal element} of $\S$ if  $\forall b \in \S,$ $a \succcurlyeq b$ or $a \parallel b$. We denote by
$$\pareto(\S)\doteq\{a:a\succcurlyeq b\text{ or } a\parallel b, \forall b\in\S\},$$
 the set of maximal elements or Pareto front of the poset.
\end{definition}

Since there is no intrinsic reason to favor a particular maximal element, throughout this work we chose to focus on the task of finding the entire Pareto front $\pareto(\S)$ or $\pareto$, for short.  To this end, the notions of chain and antichain are  key.

\begin{definition}[Chain and antichain]
 $\C \subset \S$ is a {\em chain} (resp. an {\em antichain}) if $\;\forall a,b\in\C,\; a \succe b $ or $a \succe b $ (resp. $a\parallel b$).  $\C$ is said to be {\em maximal} if $\forall a\in\S\setminus\C,\; \C \vee a$ is not a chain (resp. an antichain).
\end{definition}

Note that $\pareto$ is by definition a maximal antichain. Finally, the notion of \textit{width} and \textit{height} of a poset are important to characterize (the \emph{complexity} of) a poset.
\begin{definition}[Width and height]
	The width (resp. height) of a poset $\S$ is the size of its longest antichain (resp. chain).
\end{definition}

\subsection{Dueling Bandits on Posets}

\paragraph{K-armed Dueling Bandit.} The $K$-armed dueling bandit problem \citep{yue2012if} assumes the existence of $K^2$ parameters $\{\gamma_{ij}\}_{1\leq i,j\leq K}$, with $\gamma_{ij}\in(-1/2,1/2)$ and the  following sampling procedure. 
	At each time step, the agent pulls a pair of arms $(i,j)$ and she gets
	in return the value of an independent realization of $B_{ij}$, a Bernouilli random
	variable with expectation $\Esp(B_{ij})=1/2+\gamma_{ij}$
	where $B_{ij}=1$ means that $i$ is the winner of the duel between $i$ and $j$ and, conversely, $B_{ij}=0$
	 means that $j$ is the winner. The objective of the agent is to find the {\em Condorcet winner} $c$---the arm such that $\gamma_{cj}>0,\;\forall j\neq c$--- among the $K$ arms, whose existence is assumed, 
	 while minimizing the accumulated regret, defined 
	  for
	 a sequence  $((i_1,j_1),\ldots,(i_T,j_T))$ of $T$ pairs of pulls by $\frac{1}{2}\sum_{t=1}^T(\gamma_{ci_t}+\gamma_{cj_t})$.
\begin{remark}
Note that: $\forall i,j,\; B_{ji}=1-B_{ij}$ and, thus, $\gamma_{ji}=-\gamma_{ij}$ and $\gamma_{ii}=0$.
\end{remark}

The implicit assumption of traditional dueling bandits is that the set $\S=\{1,\ldots,K\}$ of arms
is totally ordered: for any pair $i,j\in\S$ of arms, $i$ and $j$ must be comparable and $\gamma_{ij}$
{\em unequivocally} says which of the two is better.

\paragraph{Issues induced by working on posets.} Now consider a dueling bandit problem defined on a poset 
$\S$. 
Compared to the usual setting where a total order on the arms exists,
 there are two main differences which arise 
when $\S$ is a poset: first, the situation where the agent pulls a pair of arms that are not comparable has to be handled with care and, second, there might be multiple maximal elements.

Working on bandits with a poset $\S=\{1,\ldots,K\}$ of arms might be formalized as follows.
For all chains $\{i_1,\ldots,i_m\}$ of $m$ arms there exist a family  $\{\gamma_{i_pi_q}\}_{1\leq p,q\leq m}$
of parameters such that $\gamma_{ij}\in(-1/2,1/2)$; the pull of a pair of arms $(i_p,i_q)$ from the
same chain provides the realization of a Bernoulli random variable $B_{i_pi_q}$ with expectation 
$\Esp(B_{i_pi_q})=1/2+\gamma_{i_pi_q}.$ Regarding the incomparability, i.e. the situation where
the pair of arms $(i_p,i_q)$ selected by the agent correspond to arms such that $i_p\parallel i_q$,
then there are two frameworks we propose to consider: one the one hand, the {\em fully observable
posets}, where the draw from an incomparable pair of arms provides the agent with the information regarding
the comparability of the arms\footnote{This easier setting is analysed in depth in the supplementary materials.}.  On the other hand, that of {\em partially observable posets}, where
such a draw is modeled as the toss of an unbiased coin flip---as we shall discuss, this framework poses
 the problem of indistinguishability mentioned in the introduction.
 
\paragraph{Regret on posets.}  In order to extend the notion of regret associated to an arm $i$, $\Delta_i,$ in the poset setting, we use the notion of distance to the Pareto front, noted $d(i,\pareto)$ defined as follows :
$$\Delta_i = d(i,\pareto)=\min \{\gamma_{ij}, \forall j \in \pareto \text{ such that } j \succe i \}.$$ We then define the regret occurred by comparing two arms $i$ and $j$ by $d(i,\pareto) + d(j,\pareto)$.  It is important to remark that the regret of a comparison is zero if and only if the agent is comparing two element of the Pareto front.

\paragraph{Problem statement.}
Given the issues induced by working on a poset $\S$ of arms, we may state that the problem that we want to
tackle is to identify the Pareto front $\pareto(\S)$ of $\S$ as efficiently as possible.
%
More precisely, we want to devise pulling strategies for both poset observability frameworks such that
for any given $\delta\in(0,1)$, we are ensured that the agent is capable, with probability $1-\delta$ 
to identify $\pareto(\S)$ with controlled number of pulls {\em and} regret.


\begin{assumption}[Order Compatibility] 
$$\forall i,j  \in \S, \quad (i \succ j) \text{ if and only if } \gamma_{ij}>0.$$
\end{assumption}

We will not require any further hypothesis on how the $\gamma_{ij}$ relate to each other
and, therefore, no assumption on \emph{strong stochastic transitivity} \citep{yue2012if} is required.

\subsection{Poset Observability}
We consider the following setting, where the uncomparability information is not accessible.

\paragraph{Partially observable posets.}
	A $K$-armed Dueling bandits on a partially observable poset $\S=\{1,\ldots,K\}$ is a dueling bandit problem such that if $i \parallel j$, then $\gamma_{ij}=0.$ This property is referred as {\em Partial Observability}.






This property reflects the fact that
 neither of the two incomparable arms has a distinct advantage over the other: when the agent asks to compare 
 two intrinsically incomparable arms, the results
  will only depend upon circumstances independent from the arms (like luck or personal tastes).
  Our encoding of such framework makes us assume that when considered over many pulls, the effects
 of those circumstances cancel out, so that no specific arm is favored, whence $\gamma_{ij}=0$.

\paragraph{Consequences of partial observability.} Note that partial observability entails the problem
of indistinguishability evoked previously. Indeed, given two arms $i$ and $j$, regardless of the number
 of comparisons, an agent may never be sure if either the two arms are very close to each other ($\gamma_{ij} \approx 0$  and i and j are comparable) 
 or if they are not comparable ($\gamma_{ij} =  0$).
  Since all the elements of the Pareto set must be incomparable with each other, 
 this renders the problem of identifying $\pareto$ intractable as well if no additional information is provided.
 
This problem motivates the following definition, which quantifies the notion of indistinguishability :
\begin{definition}[$\ep-$indistinguishability] 
Let $a,b \in \S$ and $\ep>0$. $a$ and $b$ are said to be $\ep$-indistinguishable, noted $a \pareps b$, if $\vert \gamma_{ab} \vert \le \ep.$
\end{definition}


As the notation $\pareps$ implies, the $\ep-$indistinguishability of two arms can be seen as a weaker form of incomparability, and note that
 as $\ep$-decreases, previously indistinguishable pairs of arms become distinguishable, and the only $0-$indistinguishable pair of arms are the incomparable pairs. The classical notions of a poset related to incomparability can easily be extended  to fit the  $\ep-$indistinguishability :

\begin{definition}[$\ep$-antichain, $\ep$-width and $\ep$-approximation of $\pareto$]\label{def: ep approx to pareto}
Let $\ep>0$.
 $\C \subset \S$ is called an $\ep$-antichain if $\forall a \neq b \in \C$, we have $a\pareps b$.
Additionally, $\pareto' \subset \S$ is called an $\ep-$approximation of $\pareto$ if 
$\pareto \subset \pareto'$ and $\pareto'$ is an $\ep$-antichain. Finally we denote by $\width_\ep(\S)$ the size of the largest $\ep-$ antichain of $\S$.
\end{definition}

Interestingly, to find a $\ep-$approximation of $\pareto$, it is only needed to remove the elements of $\S$ which are $\ep-$distinguishable from $\pareto$. Thus, while $\pareto$ cannot be recovered in the partially observable setting, a $\ep-$approximation of $\pareto$ can be obtained.
Consequently, if the agent knows the minimum distance of any arm to the Pareto set, 
defined as $d(\pareto)=\min \{\gamma_{ij}, \forall i \in \pareto, j \in \S\setminus \pareto, \text{ such that } i \succ j \},$ she can recover the Pareto front, since for any $\ep< d(\pareto)$, the unique $\ep-$approximation of $\pareto$ is $\pareto$ itself. 

This information is however unavailable in practice and we choose
not to rely on external information to solve the problem at hand. In the case where an $\ep$ approximation of the Pareto front is not enough, and the \emph{exact} Pareto front is required,  we devise an alternative strategy which rests on the idea of {\em decoys},
already mentioned in the introduction and fully developed in Section~\ref{sec:Contribution}.

\section{Contributions}
\label{sec:Contribution}
Here, we introduce our algorithm, \algopeeling,
that solves the problem of dueling bandits on partially observable posets, 
and we provide theoretical performance guarantees. 

\subsection{Decoys and Posets}\label{subsec: decoy}

\begin{algorithm}[t!] 
\caption{Direct comparison}
\begin{algorithmic}[2bis] \label{algodirect}
\STATE \underline{\textbf{Given}} $(\S,\succ)$ a poset, $\delta, \ep >0$, $a,b \in \S$

\STATE \underline{\textbf{Initialisation}} Maintains $p_{ab}$ the average number of victory of a over b and  $I_{ab}$ its $1-\delta$ confidence interval,
\STATE \underline{\textbf{Direct comparison:}}
\WHILE{ $ 0.5+\ep \in I$ or $ 0.5-\ep \in I$ }
\STATE Compare $a$ and $b$, Update $p_{ab}$ and  $I$.
\STATE \textbf{If} $0.5 \notin I_{ab}$ and $p_{ab}>0.5$, \textbf{Return} $a\succ b$; \textbf{Else} \textbf{If} $0.5 \notin I_{ab}$ and $p_{ab}<0.5$, \textbf{Return} $b \succ a$.
\ENDWHILE
\STATE \textbf{Return} $a \parallel_\ep b$
\end{algorithmic}
\end{algorithm}

As said in Section~\ref{sec:background}, deciding if two arms are incomparable or very close is intractable
in the partially observable poset, and so is that of finding the {\em exact} Pareto front.

Still, without any additional device, the agent is able to find if  two arms $a$ and $b$, are $\ep$-indistinguishable. 
using the {\em direct comparison} process provided by Algorithm~\ref{algodirect}. 
Yet, as previously discussed, this  only produces an $\ep$-approximation of the Pareto front, of whom  $\pareto$ is only guaranteed to be a \emph{subset}.
To evade this shortcoming, we introduce a new tool, {\em decoys}, inspired by works from social
 psychology \citep{huber1982adding}. We formally define decoys for posets, 
 and we prove that it is a sufficient tool to solve the incomparability problem (Algorithm \ref{algodecoy}).
 We also present methods for building those decoys, both for the purely formal model of
  posets and for real-life problems.

\begin{definition}[$\Delta$-decoy]\label{definition decoy}
Let $a \in \S$. Then $b\in\S$ is said to be a $\Delta$-decoy of $a$ if :
\begin{enumerate}
\item $a \succe b$ and $\gamma_{a,b}\ge \Delta$
\item $\forall c \in \S, a \parallel c$ implies $b \parallel c$
\item $\forall c \in \S$ such that $c \succe a$, $\gamma_{c,b} \ge \Delta$
\end{enumerate}
\end{definition}

Interestingly, when $\S$ satisfies the strong  stochastic transitivity hypothesis, 
the third point of the previous definition in an immediate consequence of the first.
The following proposition illustrates how decoys can be used to determine the incomparability of two arms.
\begin{proposition}[Decoys and incomparability]\label{prop: decoy comparable}
Let $a$ and $b \in \S$. Let $a'$ (resp. $b'$) be a $\Delta$-decoy of $a$ (resp. $b$). Then $a$ and $b$ are comparable if and only if $\max(\gamma_{b,a'},\gamma_{a,b'})\ge \Delta$.
\end{proposition}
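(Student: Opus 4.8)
The plan is to prove the two implications separately. The forward direction (if $a,b$ are comparable then $\max(\gamma_{b,a'},\gamma_{a,b'})\ge\Delta$) will follow directly from the defining clauses of a $\Delta$-decoy, while the reverse direction is cleanest to establish by contraposition, invoking Partial Observability.

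For the forward direction, assume $a$ and $b$ are comparable; by symmetry of the statement we may assume $a\succe b$. The key observation is that it is \emph{not} clause~1 applied to $a'$ that is useful here: $a'$ is only known to sit below $a$ and may well be incomparable to $b$, so there is no reason for $\gamma_{b,a'}$ to be large — which is precisely why the statement features a maximum rather than a single term. Instead, since $b'$ is a $\Delta$-decoy of $b$ and $a\succe b$, clause~3 of Definition~\ref{definition decoy} applied with $c=a$ yields $\gamma_{a,b'}\ge\Delta$, hence $\max(\gamma_{b,a'},\gamma_{a,b'})\ge\Delta$. The case $b\succe a$ is handled identically, giving $\gamma_{b,a'}\ge\Delta$ via clause~3 applied to the decoy $a'$ of $a$. (The degenerate case $a=b$ is covered by clause~1.)

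For the reverse direction I would argue the contrapositive: suppose $a\parallel b$ and show $\max(\gamma_{b,a'},\gamma_{a,b'})=0<\Delta$. Since $a'$ is a $\Delta$-decoy of $a$, clause~2 with $c=b$ gives $a\parallel b\;\Rightarrow\;a'\parallel b$; Partial Observability then forces $\gamma_{b,a'}=0$. Symmetrically, clause~2 applied to the decoy $b'$ of $b$ (with $c=a$) gives $b'\parallel a$, so $\gamma_{a,b'}=0$. As $\Delta>0$ (a decoy satisfies $\gamma_{a,b}\ge\Delta$ in $(-1/2,1/2)$), this contradicts $\max(\gamma_{b,a'},\gamma_{a,b'})\ge\Delta$, which proves that such a pair must be comparable.

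There is no real obstacle here: the whole argument is a matter of invoking the right clause of the decoy definition in each case. The only subtlety is the asymmetry pointed out above — comparability of $a$ and $b$ controls $\gamma_{a,b'}$ (when $a\succe b$) or $\gamma_{b,a'}$ (when $b\succe a$), but not necessarily both — which is exactly what the $\max$ in the statement absorbs, and which also explains why all three clauses of the decoy definition are needed (clause~1 for $a=b$, clause~2 for the incomparable case, clause~3 for the strictly comparable case).
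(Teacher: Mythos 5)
Your proof is correct and follows essentially the same route as the paper's (much terser) argument: the forward direction via clause~3 of Definition~\ref{definition decoy} applied to the decoy of the dominated arm, and the reverse direction via clause~2 together with Partial Observability. Your version merely spells out the symmetry and the contrapositive that the paper leaves implicit, and your remark on why the $\max$ is needed is accurate.
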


\begin{proof}
Let us assume than $a\succe b$. The transitivity of $\succe$ implies that $a \succe b'$, and the third point 
of Definition~\ref{definition decoy} implies that $\gamma_{a,b'}\ge \Delta$. 
The rest follows from point 2 of Definition~\ref{definition decoy}.
\end{proof}

\begin{algorithm}[t!] 
\caption{Decoy comparison}
\begin{algorithmic}[2bis] \label{algodecoy}
\STATE \underline{\textbf{Given}} $(\S,\succ)$ a poset, $\delta, \ep >0$, $a,b \in \S$

\STATE \underline{\textbf{Initialisation}} Create $a', b'$ the respective $\ep$- decoy of $a, b$. Maintains $p_{ab}$ the average number of victory of a over b and  $I_{ab}$ its $1-\delta/2$ confidence interval,
\STATE \underline{\textbf{Decoy comparisons:}}
\WHILE{  $ 0.5+\ep \in I$ }
\STATE Compare $a$ and $b'$, $b$ and $a'$, Update $p$, and $I$.
\STATE \textbf{If} $0.5 \notin I_{ab'}$ and $p_{ab'}>0.5$, \textbf{Return} $a\succ b$.
\textbf{Else} \textbf{If} $0.5 \notin I_{ba'}$ and $p_{ba'}>0.5$, \textbf{Return} $b \succ a$.
\ENDWHILE
\STATE \textbf{Return} $a \parallel b$
\end{algorithmic}
\end{algorithm}

Algorithm~\ref{algodecoy} is derived from this result. The next proposition, an immediate consequence of Proposition~\ref{prop: decoy comparable}, gives a theoretical guarantee on its performances.

\begin{proposition}\label{prop:use of decoy}
Algorithm \ref{algodecoy} returns the correct incomparability result with probability at least $1-\delta$
 after at most n comparisons, where $n=4 {\log(4/\delta)}/{\Delta^2}.$
\end{proposition}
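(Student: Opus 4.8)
The plan is to combine Proposition~\ref{prop: decoy comparable} (the characterization of comparability via decoys) with a standard concentration/stopping-time argument for the confidence interval $I$ maintained in Algorithm~\ref{algodecoy}. First I would fix a confidence level: the algorithm uses a $1-\delta/2$ confidence interval for each of the two running averages $p_{ab'}$ and $p_{ba'}$, so by a union bound, with probability at least $1-\delta$ all the estimates $p_{ab'}$ and $p_{ba'}$ lie in their respective confidence intervals $I_{ab'}$, $I_{ba'}$ simultaneously throughout the run. Condition on this good event for the rest of the argument; everything below is deterministic given it.

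Next I would argue correctness on the good event. By Proposition~\ref{prop: decoy comparable}, $a$ and $b$ are comparable iff $\max(\gamma_{b,a'},\gamma_{a,b'})\ge\Delta$ (here the decoys are $\Delta$-decoys with $\Delta$ playing the role of $\ep$ in the algorithm's notation). If, say, $a\succ b$, then $\gamma_{a,b'}\ge\Delta$, so the true mean $1/2+\gamma_{a,b'}$ is bounded away from $1/2$; on the good event once $I_{ab'}$ has shrunk enough it excludes $0.5$ while keeping $p_{ab'}>0.5$, and the algorithm correctly returns $a\succ b$ before it can wrongly return anything else or exit the loop — here I need to check that the exit condition $0.5+\ep\notin I$ cannot trigger a spurious ``$a\parallel b$'' output before the correct branch fires, which follows because $0.5+\Delta$ stays inside a valid confidence interval around a mean that is $\ge 1/2+\Delta$. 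Symmetrically for $b\succ a$. If $a\parallel b$, then both $\gamma_{a,b'}<\Delta$ and $\gamma_{b,a'}<\Delta$ (in fact $\le 0$ by points 1–2 of Definition~\ref{definition decoy}), so neither running mean exceeds $1/2$; the $0.5\notin I$ tests fail, the interval eventually drops below $0.5+\ep$, the loop exits, and the algorithm returns $a\parallel b$.

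Finally I would bound the number of comparisons. Using a Hoeffding-type confidence interval of half-width $\sqrt{\log(4/\delta)/(2n)}$ after $n$ pulls of a given pair, the stopping conditions are met once this half-width is of order $\Delta$, i.e. after $n=\O(\log(4/\delta)/\Delta^2)$ pulls; plugging in the constants gives exactly $n=4\log(4/\delta)/\Delta^2$ as claimed (the factor $4$ absorbing the $\delta/2$ per-side level and the gap being measured against $0.5\pm\Delta$). I expect the main obstacle to be the bookkeeping around the \emph{stopping time} rather than the concentration itself: one must verify that on the good event the ``return'' branches and the while-loop exit condition are triggered in the right order and within the stated horizon, and that the union bound over the (random, data-dependent) number of rounds is handled correctly — this is routinely done by using an anytime confidence sequence (a bound that holds for all $n$ simultaneously at level $\delta/2$), which I would invoke to make the per-round union bound legitimate.
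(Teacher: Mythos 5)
Your proposal is correct and follows exactly the route the paper intends: the paper gives no explicit proof of this proposition, stating only that it is an immediate consequence of Proposition~\ref{prop: decoy comparable} combined with the Hoeffding-based confidence intervals of Algorithm~\ref{algodecoy}, which is precisely the union-bound-plus-concentration argument you spell out (including the factor of $2$ from the two pulls per loop iteration that yields the constant $4$). Your added care about the ordering of the return branches versus the while-loop exit, and about the anytime validity of the confidence sequence, fills in details the paper leaves implicit rather than diverging from it.
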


\paragraph{Adding decoys to a poset.} A poset $\S$ may not  contain all the necessary decoys. To alleviate this, the following 
proposition states that it is always possible to add relevant decoys to a poset.
\begin{proposition}[Extending a poset with a decoy.]\label{prop: add decoy}
Let $(\S,\succe,\gamma)$ be a dueling bandit problem on a partially observable poset,
 and $a \in \S$. Define $a',S',\succ',\gamma'$ as follows:
\begin{itemize}
\item $\S' = \S \cup \{ a'\}$
\item $\forall b,c \in \S,$ $b\succe c$ i.f.f. $b \succe' c,$,  and $\gamma'_{b,c}=\gamma_{b,c}$
\item $\forall b \in \S$, if $b \succe a $ then $ b \succe a'$ and $\gamma'_{b,a'}=\max(\gamma_{b,a},\Delta)$. Otherwise, $b \parallel a'$.
\end{itemize}
Then $(S',\succe')$ is a poset and $(\S',\succe',\gamma')$ defines a dueling bandit problem on a partially observable poset, $\gamma'_{\vert S}=\gamma$, and $a'$ is a $\Delta$-decoy of $a$.
\end{proposition}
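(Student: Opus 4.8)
The plan is to verify each of the three defining properties of a $\Delta$-decoy directly from the explicit construction of $a'$, after first checking the two structural claims (that $(\S',\succe')$ is a poset and that $(\S',\succe',\gamma')$ is a valid partially observable dueling bandit problem). For the poset claim, I would check reflexivity, antisymmetry, and transitivity of $\succe'$. Since $\succe'$ restricted to $\S$ agrees with $\succe$, the only new cases involve $a'$. Reflexivity is immediate once we add $a'\succe' a'$ (implicitly, since $\succe'$ is an order). For antisymmetry, note that nothing in $\S$ is declared below $a'$ except via $b\succe' a'$ with $b\succe a$, and $a'\succe' b$ never holds for $b\in\S$ (the construction only ever puts $a'$ \emph{below} elements of $\S$), so no $2$-cycle can arise. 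For transitivity, the only nontrivial chains are $c\succe' b\succe' a'$ with $b,c\in\S$: then $c\succe b$ and $b\succe a$ in $\S$, so $c\succe a$ by transitivity of $\succe$, hence $c\succe' a'$ by construction. The chains $a'\succe' \cdots$ are vacuous since $a'$ dominates nothing.

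Next, for the dueling-bandit validity, I would check that $\gamma'$ is antisymmetric ($\gamma'_{b,a'}$ forces $\gamma'_{a',b}=-\gamma'_{b,a'}$, consistently), that $\gamma'_{b,a'}\in(-1/2,1/2)$ — here I must assume $\Delta<1/2$, which is implicit in the decoy definition since we need $\gamma_{a,b}\ge\Delta$ with $\gamma\in(-1/2,1/2)$ — and, crucially, the partial observability property: if $b\parallel' a'$ then $\gamma'_{b,a'}=0$. This holds because by construction $b\parallel a'$ exactly when $b\not\succe a$, in which case $\gamma'_{b,a'}$ is set to $0$. I should also note that incomparability among elements of $\S$ is unchanged, so partial observability there is inherited from the original problem. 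The equality $\gamma'_{|\S}=\gamma$ is immediate from the second bullet.

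Finally, the three decoy properties. Property 1: $a\succe a'$ holds since $a\succe a$ in $\S$ triggers the construction clause $a\succe' a'$, with $\gamma'_{a,a'}=\max(\gamma_{a,a},\Delta)=\max(0,\Delta)=\Delta\ge\Delta$. Property 2: if $c\parallel' a$ for $c\in\S$, then $c\not\succe a$, hence by construction $c\parallel' a'$. (One should also observe $a'$ is incomparable, not below, anything that $a$ is incomparable to — the construction gives exactly $b\parallel a'$ in that case.) Property 3: if $c\succe' a$ with $c\in\S$, then $c\succe a$ in $\S$, so the construction yields $\gamma'_{c,a'}=\max(\gamma_{c,a},\Delta)\ge\Delta$.

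I do not anticipate a genuine obstacle here — the statement is essentially a bookkeeping verification — but the step most prone to a gap is the antisymmetry/transitivity check for $\succe'$, where one must be careful that adding $a'$ strictly below a down-set of $\S$ cannot create a cycle or break transitivity through $a'$; this relies on the fact that $a'$ is a new minimal-type element that dominates no original element. A second subtle point worth stating explicitly is the implicit hypothesis $\Delta\in(0,1/2)$, needed so that $\gamma'_{b,a'}=\max(\gamma_{b,a},\Delta)$ stays in the valid range $(-1/2,1/2)$; I would flag this at the outset rather than let it pass silently.
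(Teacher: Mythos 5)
Your proof is correct and takes essentially the same approach as the paper, whose own proof is merely the one-line remark that the result ``naturally follows'' from the definitions of a poset and of a $\Delta$-decoy; your write-up is simply the careful, fully spelled-out version of that verification. The two points you flag explicitly --- that $a'$ never dominates any element of $\S$ (so antisymmetry and transitivity cannot break) and that $\Delta<1/2$ is implicitly needed for $\gamma'_{b,a'}$ to remain in $(-1/2,1/2)$ --- are both sound and go slightly beyond what the paper records.
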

\begin{proof}
The result naturally follows from the definition of a poset and Definition~\ref{definition decoy}.
\end{proof}

\paragraph{Decoys in real-life problems.} 
%
%
The intended goal of a decoy $a'$ of $a$ is to have at hand an arm that is known to be lesser than $a$.  
Creating such a decoy in real-life can be done by using a degraded version of $a$: for the case of a movie,
a decoy can be obtain by e.g. decreasing the resolution of a film.
Note that while for large values of the $\Delta$ parameter of the decoys Algorithm \ref{algodecoy} requires less comparisons (see Proposition \ref{prop:use of decoy}), in real-life problems, the second point of Definition \ref{definition decoy} tends to
  becomes false: the new option is actually so worse than the original that the decoy becomes comparable (and inferior) to \emph{all} the
   other arms, including previously non comparable arms (example: the decoy of a film for a very large $\Delta >0$ could be in very low resolution such as $32\times 24$; this film cannot be actually seen and is clearly worse than all the others, regardless of the genre). In that case, the use of decoys of arbitrarily large $\Delta$ can lead to erroneous conclusions about the Pareto front and should be avoided.

\subsection{\algopeeling}\label{subsec : unchained}
We now present our algorithm, \algopeeling, that uses decoys to efficiently find the Pareto front of $\S.$
\algopeeling is inspired by the ideas developed by \citet{daskalakis2011sorting}, who address the problem of \emph{sorting} a poset in a noiseless environment.

By Proposition \ref{prop:use of decoy}, Algorithm \ref{algodecoy} can be used to  establish the exact relation between two arms. But this process can be very costly, as the number of required comparison is proportional to $1/{\Delta^2}$, even for strongly suboptimal arms. 
To avoid this possibility, \algopeeling implements a peeling technique: given $N>0$ and a decreasing sequence $\left(\ep_i\right)_{i=1}^{N-1}$ it computes and refines an $\ep_i$-approximation of the Pareto front $\paretoh_i$, using a subroutine (Algorithm \ref{algounchained}), which considers $\ep_i$-indistinguishable arms as incomparable.
Then, at the $N$-th epoch, it uses Algorithm \ref{algounchained} one final time where it uses Algorithm \ref{algodecoy} with $\Delta$-decoys for comparisons, and then returns the Pareto front.

\begin{algorithm}[t!]
\caption{\algopeeling	}
\begin{algorithmic}  \label{algopeeling}
\STATE \underline{\textbf{Given}} $\S=\{s_1,\ldots,s_K\}$ a poset,  $\delta>0$, $\Delta>0$, $N>0$, $(\ep_i)_{i=1}^{N-1} \in \mathbb{R}_+^{N}$

\STATE \underline{\textbf{Initialisation}} Set $\S_1=\S$, $\ep_N=\Delta$.
\STATE \underline{\textbf{Peel $\paretoh$}} \textbf{for} $t=1$ \textbf{to} $N-1$ \textbf{do}
 \small{$\S_{t+1}$ = \algo$\left( \S_t, \ep_t, \delta/N, \dal=\text{ Algorithm 3} \right)$.}
\textbf{ end for}
\STATE \underline{\textbf{Use decoys}} \small{$\paretoh$ = \algo$\left( \S_{N}, \Delta, \delta/N, \dal=\text{ Algorithm 2} \right)$.}\\
\textbf{RETURN} $ \paretoh  $
\end{algorithmic}
\end{algorithm}

\begin{algorithm}[t!]
\caption{\algo}
\begin{algorithmic} \label{algounchained}
\STATE \underline{\textbf{Given}} $\S_t$ a poset,  $\ep_t > 0$ a precision criterion, $\delta'$ an error parameter, $\dal$ a comparison algorithm

\STATE \underline{\textbf{Initialisation}} Choose $p\in \S_t$ at random.  Define $\paretoh =\{ p\}$ the set of pivots.

\STATE \underline{\textbf{Construct $\paretoh$}}
\FOR { $c \in \S_t \setminus\{p\}$}
	\FOR { $c' \in \paretoh $}
		\STATE Compare $c$ and $c'$ using $\dal(\delta=\delta'/\vert S_t \vert^2,\ep=\ep_t).$ \textbf{If} $c \succ c'$, \textbf{Then} remove  $c'$ from $\paretoh$.  
	\ENDFOR
	\STATE \textbf{If} $\forall c' \in \paretoh$, $c\parallel c'$, \textbf{Then} add $c$ to $\paretoh$
\ENDFOR\\
\textbf{Return} $ \hat{\pareto}  $
\end{algorithmic}
\end{algorithm}

\paragraph{Algorithm subroutine.} Algorithm \ref{algounchained} called on $\Sh$ with parameter $\ep>0$, $\delta >0$ and $\dal$ works 
as follows. It chooses a single initial {\em pivot}---an arm to which other arms are compared---and successively  examines 
all the elements of $\Sh$. Each of the examined element $p$  is compared to all the pivots. 
Each pivot that is dominated by $p$  is removed from the pivot set. Then if after being compared to all the pivots, $p$ was dominated by none,  it is added to the pivot set. At the end, the set of remaining pivot is returned. 
During the first $N-1$ epochs, the comparisons are done with Algorithm \ref{algodirect}. In the last epoch, the agent uses Algorithm~\ref{algodecoy} to get exact information on the relations between the remaining arms. 


\paragraph{Reuse of informations.} To optimize the efficiency of the peeling process, \algopeeling 
reuses previous comparison results. At the beginning of each \emph{direct} comparison process between arms $a$ and $b$,
 the empirical estimate $p_{ab}$ and its confidence interval $I_{ab}$  are initialized using the results of the previous direct
  comparisons of $a$ and $b$. However, no information can be reused in the last epoch for the remaining arms, 
  as the indirect comparison algorithm does not compare $a$ to $b$ directly.

The following theorem gives a high probability bound on the performances of \algopeeling.

\begin{theorem}\label{thm:peeling regret}
The \algopeeling algorithm applied on $\S$ with parameters $\delta,\Delta$,$N$ and  with a decreasing sequence $(\ep_i)_{i=1}^{N-1}$ lower bounded by $\Delta\sqrt{\frac{K}{\width(\S)}}$,
returns the Pareto front $\pareto$ of $\S$ with probability at least $1- \delta$ after at most  $T$ comparisons, with
\begin{equation}\label{eq: peeling regret}
\begin{aligned}
T \le \O\left( K {\width(\S)} {\log(NK^2/\delta)}/{\Delta^2}\right)
\end{aligned}
\end{equation}
\end{theorem}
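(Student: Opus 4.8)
The plan is to analyze \algopeeling by treating the $N$ epochs separately: the first $N-1$ epochs run \algo with Algorithm~\ref{algodirect} (direct comparisons at precision $\ep_t$), and the last epoch runs \algo with Algorithm~\ref{algodecoy} (decoy comparisons at precision $\Delta$). I would first establish a \emph{correctness} claim and then a \emph{complexity} claim, each holding on a single high-probability event.

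For correctness, I would argue by induction on $t$ that with probability at least $1-t\delta/N$ the set $\S_{t+1}$ returned by the $t$-th call to \algo is an $\ep_t$-approximation of $\pareto$, i.e. $\pareto\subseteq\S_{t+1}$ and $\S_{t+1}$ is an $\ep_t$-antichain. The key observations are: (i) in a single \algo call, every pairwise comparison invoked uses error parameter $\delta'/|\S_t|^2$ with $\delta'=\delta/N$, and there are at most $|\S_t|^2\le K^2$ such comparisons, so a union bound gives that \emph{all} comparisons in that epoch are correct (in the sense of Algorithm~\ref{algodirect}: returning the true order when $|\gamma_{ab}|>\ep_t$, and possibly either answer when $a\pareps[\ep_t] b$) with probability $1-\delta/N$; (ii) conditioned on that, the pivot-elimination logic of \algo never discards an element of $\pareto$ (a Pareto element is never strictly dominated), and never retains two $\ep_t$-distinguishable comparable elements with the dominated one surviving — so the output is an $\ep_t$-antichain containing $\pareto$. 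I also need that the monotone shrinking $\S\supseteq\S_2\supseteq\cdots$ is preserved, which follows since \algo only ever returns a subset of its input. At epoch $N$, Proposition~\ref{prop:use of decoy} (via Proposition~\ref{prop: decoy comparable}) guarantees each decoy comparison returns the \emph{exact} comparability relation with probability $1-\delta'/|\S_N|^2$; union-bounding over $\le K^2$ comparisons, the final \algo call returns exactly $\pareto$ with probability $1-\delta/N$. Summing the $N$ failure probabilities $\delta/N$ yields the overall $1-\delta$ guarantee.

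For the complexity bound, the cost of a single comparison between $a$ and $b$ at precision $\ep$ is $\O\!\left(\log(1/\delta')/\ep^2\right)$ rounds (Hoeffding / standard confidence-interval argument: Algorithm~\ref{algodirect} stops once the $1-\delta'$ interval excludes both $0.5\pm\ep$, which happens within that many pulls when the gap to the relevant threshold is $\Omega(\ep)$; for an $\ep$-indistinguishable pair the worst case is still $\O(\log(1/\delta')/\ep^2)$), and at epoch $N$ it is $\O(\log(1/\delta)/\Delta^2)$ by Proposition~\ref{prop:use of decoy}. With $\delta'=\delta/N$ and the per-comparison error $\delta'/K^2$, each comparison costs $\O\!\left(\log(NK^2/\delta)/\ep_t^2\right)$. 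The number of comparisons inside one \algo call is at most $|\S_t|\cdot|\paretoh|\le K\cdot\width_{\ep_t}(\S)$, since at any moment the pivot set $\paretoh$ is an $\ep_t$-antichain and hence has size at most $\width_{\ep_t}(\S)\le\width(\S)$. So epoch $t\le N-1$ costs $\O\!\left(K\,\width(\S)\,\log(NK^2/\delta)/\ep_t^2\right)$, and epoch $N$ costs $\O\!\left(K\,\width(\S)\,\log(NK^2/\delta)/\Delta^2\right)$. Because $(\ep_t)$ is \emph{decreasing}, the dominant term among $1/\ep_1^2,\dots,1/\ep_{N-1}^2$ is $1/\ep_{N-1}^2$, and the hypothesis $\ep_{N-1}\ge\ep_t\ge\Delta\sqrt{K/\width(\S)}$ gives $1/\ep_t^2\le\width(\S)/(K\Delta^2)$. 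Hence every one of the $N$ epochs costs $\O\!\left(K\,\width(\S)\cdot\frac{\width(\S)}{K\Delta^2}\cdot\log(NK^2/\delta)\right)=\O\!\left(\width(\S)^2\log(NK^2/\delta)/\Delta^2\right)$? — here I must be careful: actually $K\cdot\width(\S)/\ep_t^2 \le K\cdot\width(\S)\cdot\width(\S)/(K\Delta^2)$, so I should instead keep the bound as $K\,\width(\S)/\ep_t^2$ and note $1/\ep_t^2\le 1/\Delta^2$ crudely, or better, observe the epoch-$N$ term $K\,\width(\S)/\Delta^2$ already dominates each earlier epoch term $K\,\width(\S)/\ep_t^2$ since $\ep_t\ge\Delta$. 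Summing $N$ such epochs and absorbing the factor $N$ into the logarithm's argument (or noting $N$ is a fixed parameter) yields $T\le\O\!\left(K\,\width(\S)\,\log(NK^2/\delta)/\Delta^2\right)$, as claimed.

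The main obstacle I anticipate is the correctness argument at the \emph{intermediate} epochs, specifically verifying that running \algo with $\ep_t$-comparisons never accidentally deletes a true Pareto element and never leaves $\S_{t+1}$ so large that $\width_{\ep_t}$ fails to bound the pivot set — i.e. pinning down exactly what invariant Algorithm~\ref{algounchained} maintains when comparisons may return "$\parallel_{\ep_t}$" for genuinely comparable-but-close pairs. The subtlety is that an element dominated by a Pareto element $p^\star$ along a chain with small $\gamma$ can survive epoch $t$ (since it looks incomparable to $p^\star$ at precision $\ep_t$), and one must check this does not cause $p^\star$ itself to be removed later, nor cause the antichain-size bound to break; this is where the $\ep$-antichain/$\ep$-width machinery from Definition~\ref{def: ep approx to pareto} does the real work, and the decreasing schedule $(\ep_t)$ ensures these "survivor" elements are eventually distinguished and removed, with the final decoy epoch cleaning up whatever indistinguishable-but-comparable pairs remain.
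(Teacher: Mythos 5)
Your correctness argument (per-epoch union bounds over at most $|\S_t|^2$ comparisons with error $\delta/(N|\S_t|^2)$ each, an invariant showing the pivot set is an $\ep_t$-antichain containing $\pareto$, and exactness of the final decoy epoch) is essentially the paper's, and that half of the proposal is sound.

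The complexity half has a genuine gap, and you half-noticed it yourself. Bounding epoch $t$ by $\O\bigl(K\,\width_{\ep_t}(\S)\log(NK^2/\delta)/\ep_t^2\bigr)$ and then summing over $t=1,\dots,N-1$ while observing only that each term is dominated by the epoch-$N$ term $K\,\width(\S)\log(\cdot)/\Delta^2$ yields a total of order $N\cdot K\,\width(\S)\log(NK^2/\delta)/\Delta^2$: the factor $N$ is multiplicative and cannot be ``absorbed into the logarithm'' ($N\cdot X\neq\O(X\log N)$), nor is $N$ assumed constant in the statement, where it appears only inside the log. The paper removes this factor through the \emph{reuse of information} across epochs: since Algorithm~\ref{algodirect} warm-starts $p_{ab}$ and $I_{ab}$ from earlier epochs, Proposition~\ref{prop:unchainedregretfinal} charges epoch $t>1$ only for the \emph{additional} comparisons, proportional to $\bigl(\tfrac{1}{\ep_t^2}-\tfrac{1}{\ep_{t-1}^2}\bigr)$. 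Summing these and rearranging (Abel summation), using that $|\S_t|\,\width_{\ep_t}(\S_t)\log(\cdot)$ is decreasing in $t$ while $1/\ep_t^2$ is increasing and bounded by $\width(\S)/(K\Delta^2)$ thanks to the hypothesis $\ep_t\ge\Delta\sqrt{K/\width(\S)}$, collapses the entire peeling phase to a single term of order $K\,\width(\S)\log(NK^2/\delta)/\Delta^2$. Without this telescoping mechanism your bound is off by a factor of $N$ for general decreasing schedules (and even for geometric ones you would at best pick up a $1/(1-\gamma^2)$ constant by a separate argument you do not make).

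A secondary error: you write $\width_{\ep_t}(\S)\le\width(\S)$, but the inequality goes the other way. Every antichain is an $\ep$-antichain (incomparable pairs have $\gamma=0\le\ep$), so $\width_{\ep}(\S)\ge\width(\S)$, and $\width_{\ep}(\S)$ can be as large as $K$ (e.g.\ a chain whose consecutive gaps are all below $\ep$). The correct chain of bounds is $|\paretoh|\le\width_{\ep_t}(\S_t)\le K$, after which the lower bound on $\ep_t$ is precisely what converts the resulting $K^2/\ep_t^2$ into $K\,\width(\S)/\Delta^2$; this is the actual role of the hypothesis $\ep_t\ge\Delta\sqrt{K/\width(\S)}$, which your write-up uses only incidentally.
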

This is a consequence of the following intermediate result, whose proof can be found in the supplementary materials.
\begin{proposition}\label{prop:unchainedregretfinal}
Algorithm \ref{algounchained} called on $\S_t$ with parameter $\ep_t>0,$ $\delta'>0$ and $\dal$ = Algorithm \ref{algodecoy} returns the Pareto front of $\S_t$ with probability at least $1- \delta'$ after at most
$$T \le 4\vert \S_t \vert \text{\textbf{width}}(\S_t) {\log(4\vert \S_t \vert^2/\delta') }/{\Delta^2} $$
comparisons. Alternatively, when Algorithm \ref{algounchained} uses $\dal$ = Algorithm \ref{algodirect}, it returns an $\ep_t-$approximation of the Pareto front of $\S_t$ with probability at least $1- \delta'$ after at most
$$T \le 2 \vert\S_t \vert \width_{\ep_t}(\S_t) {\log(2 \vert \S_t \vert^2 /\delta') }\left(\frac{1}{\ep_t^2}- \mathbf{1}_{t>1}\frac{1}{\ep_{t-1}^2} \right) $$
\textbf{additional} comparisons, where $\mathbf{1}$ is the  indicator function.
\end{proposition}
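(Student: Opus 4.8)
The plan is to analyze Algorithm \ref{algounchained} in two separate regimes — the final "decoy" epoch (using Algorithm \ref{algodecoy}) and a generic "direct" epoch (using Algorithm \ref{algodirect}) — and in each case to bound (a) the probability of an incorrect relational decision, and (b) the total number of pairwise pulls. For the correctness part, I would first argue by a union bound: Algorithm \ref{algounchained} invokes its comparison subroutine $\dal$ at most once per ordered pair of arms of $\S_t$, hence at most $|\S_t|^2$ times, each time with failure parameter $\delta'/|\S_t|^2$. By Proposition \ref{prop:use of decoy} (for the decoy case) each such call is correct with probability at least $1-\delta'/|\S_t|^2$, so all calls are simultaneously correct with probability at least $1-\delta'$. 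On this good event every pairwise comparison reflects the true poset relation (incomparability, or the correct direction of $\succ$), and a standard pivot-maintenance argument shows the returned pivot set is exactly $\pareto(\S_t)$: any maximal element is never dominated so it is never removed and, when examined, is added; any non-maximal element is dominated by some maximal one, which is present in the pivot set at the time it is examined (a small invariant argument on the order of insertions/removals is needed here), so it is not added.

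For the pull-count bound in the decoy epoch, I would count comparisons pivot-by-pivot. A key structural observation, already used in the sorting-on-posets literature \cite{daskalakis2011sorting}, is that the pivot set $\paretoh$ is an antichain at all times (every element of it is pairwise incomparable with the others, by construction), hence $|\paretoh|\le\width(\S_t)$ throughout the run. Each examined element $c$ is compared against the current pivot set, i.e. against at most $\width(\S_t)$ pivots, and each such comparison costs at most $4\log(4|\S_t|^2/\delta')/\Delta^2$ pulls by Proposition \ref{prop:use of decoy} (with the inflated confidence $\delta'/|\S_t|^2$). Multiplying by the $|\S_t|$ examined elements gives the claimed $T\le 4|\S_t|\,\width(\S_t)\,\log(4|\S_t|^2/\delta')/\Delta^2$.

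For the direct-comparison epochs the same antichain bound applies, but with $\ep_t$-indistinguishability replacing incomparability, so the pivot set is an $\ep_t$-antichain and $|\paretoh|\le\width_{\ep_t}(\S_t)$; correctness is weakened accordingly, giving an $\ep_t$-approximation of $\pareto(\S_t)$ rather than $\pareto(\S_t)$ itself. The per-comparison cost of Algorithm \ref{algodirect} to certify $\ep_t$-indistinguishability (or a strict relation) is $O(\log(|\S_t|^2/\delta')/\ep_t^2)$ by a standard confidence-interval / Hoeffding argument on when $0.5\pm\ep_t$ leaves the interval. The subtraction of $\mathbf{1}_{t>1}/\ep_{t-1}^2$ and the word "\textbf{additional}" come from the reuse-of-information mechanism: a pair $(a,b)$ compared at epoch $t-1$ already has an interval of half-width $\sim\ep_{t-1}$, so at epoch $t$ only the incremental $1/\ep_t^2-1/\ep_{t-1}^2$ worth of samples is needed; this is where I must be slightly careful, accounting for the fact that not every pair examined at epoch $t$ was examined at $t-1$, but since the bound is an upper bound this only helps. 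The main obstacle I anticipate is the correctness invariant for the pivot procedure — precisely, showing that when a non-maximal element $c$ is examined, a maximal element dominating it is still in the pivot set and has not been prematurely removed — together with carefully tracking that the reused confidence intervals remain valid (no $\delta$-budget is double-spent) across epochs; the counting arguments themselves are routine once the antichain-width bound is in hand.
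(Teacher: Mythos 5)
Your overall architecture --- a good event via a union bound over the calls to $\dal$, a pivot-maintenance invariant, the antichain bound $\vert\paretoh\vert\le\width(\S_t)$ (resp.\ $\width_{\ep_t}(\S_t)$) to cap the number of calls at $\vert\S_t\vert\,\width(\S_t)$, a per-call cost from Proposition \ref{prop:use of decoy} or Hoeffding, and sample reuse to produce the negative $1/\ep_{t-1}^2$ term --- matches the paper's proof. However, two of your specific claims are wrong as stated and would break a formalization.

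First, the correctness mechanism. You assert that a non-maximal element $c$ ``is not added'' because a maximal element dominating it ``is present in the pivot set at the time it is examined.'' This is false in general: the dominating element may not have been examined yet, in which case $c$ \emph{is} added to $\paretoh$ and is only evicted later when a dominator arrives. The argument that actually works (and that the paper proves) is the invariant that at every stage (i) every already-examined element is dominated by some \emph{current} pivot and (ii) the current pivot set is an antichain (resp.\ an $\ep_t$-antichain); at termination, (i) gives $\pareto(\S_t)\subset\paretoh$ and (i) together with (ii) gives $\paretoh\subset\pareto(\S_t)$ (resp.\ that $\paretoh$ is an $\ep_t$-approximation). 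Your counting is unaffected (each examined element still meets at most $\width(\S_t)$ current pivots), but the correctness claim as you phrase it does not hold.

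Second, and more seriously, the reuse argument. You write that ``not every pair examined at epoch $t$ was examined at $t-1$, but since the bound is an upper bound this only helps.'' This is backwards: a pair with no prior samples would need the \emph{full} $\O(\log(\cdot)/\ep_t^2)$ pulls at epoch $t$, whereas the stated bound budgets only $1/\ep_t^2-1/\ep_{t-1}^2$ per pair, so such a pair would violate the bound rather than help it. The fact you actually need --- true, and the crux of the paper's proof for $1<t<N$ --- is that \emph{every} pair $i,j\in\S_t$ was already compared during epoch $t-1$ down to precision $\ep_{t-1}$. Indeed $\S_t$ is exactly the final pivot set of epoch $t-1$; if (wlog) $i$ entered the pivot set before $j$, then $i$ is still a pivot when $j$ is examined (an evicted pivot is never re-added, so surviving to the end means surviving continuously), hence $j$ was compared to $i$; and since both survive, that comparison concluded $\ep_{t-1}$-indistinguishability, which by the stopping rule of Algorithm \ref{algodirect} consumes the full $\O(\log(\vert\S_t\vert^2/\delta')/\ep_{t-1}^2)$ samples that are then credited against epoch $t$. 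Without this structural fact the negative term in the statement cannot be justified.
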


\begin{proof}[Proof of Theorem \ref{thm:peeling regret}.]
Note that $\forall \S'\subset \S$ such that $\pareto \subset \S'$, $\pareto(\S')=\pareto$.
The result is obtained by summing the upper bound in Proposition \ref{prop:unchainedregretfinal} over the different epochs,
rearranging the sum and using the fact that $\vert \S_t \vert \width_{\ep_t}(\S_t) {\log(N\vert \S_t \vert^2 /\delta) }$ is decreasing in $t$ while $1/{\ep_t^2}$ is increasing in $t$. The detailed proof can be found in the supplementary materials.
\end{proof}

\paragraph{Peeling rate.} Note that even if $\width(\S)$ is unknown, it suffices to choose
\begin{equation} \label{eq: lower bound on ep}
\ep_{N-1} \ge \Delta \sqrt{K}
\end{equation}
to satisfy the hypotheses of Theorem~\ref{thm:peeling regret}. Although the previous result is valid for any decreasing sequence of $(\ep_t)$ satisfying \eqref{eq: lower bound on ep}, we focus on geometrically decreasing sequences, i.e. $\exists \gamma>0$ such that $\ep_t = \gamma^t \ep_0$. For the sake of simplicity, we set $\ep_0=0.5$ but all the following results can easily be extended for any $\ep_0 >0$.

Before we present our regret upper bound, we need to introduce a few notations. In order to characterise the efficiency of the peeling approach associated to $\gamma$, we define $\alpha(\gamma,\ep_0,N) \in ]0,1]$, or $\alpha$ for short,  as follows :
\begin{equation}\label{eq:def alpha}
\alpha = \inf\left\lbrace a \in [0,1], \text{ s.t. } \forall 1 \le t \le N, \forall S_t \subset \S \text{ an } \ep_0 \gamma^t \text{-approximation of } \pareto, \vert S_t \vert \le a^t \vert \S \vert   \right\rbrace.
\end{equation} 
It is important to note that the previous inequality is always true for $a=1$, so $\alpha$ is always defined. $\alpha$ characterise how efficient is the peeling for the chosen parameters by quantifying the reduction in size between the successive $\S_t$. We can now introduce the following Theorem which gives an upper bound on the regret incurred by \unchain.

\begin{theorem}\label{thm: peeling upper bound}

Let $\reg_0$ (resp. $\reg_1$) be the regret generated by Algorithm \ref{algopeeling} applied on $\S$ with parameters $\delta,\Delta$,$N$ and  with a decreasing sequence $(\ep_i)_{i=1}^{N-1}$  such that $ \ep_{N-1} \le \Delta\sqrt{K}$ during the peeling phase (resp. the decoy phase). Let $\alpha$ as defined by \eqref{eq:def alpha}.
 Then $\reg = \reg_0+ \reg_1$  and with probability at least $1-\delta$,
\begin{equation}\label{eq: upper bound reg_0}
\reg_0  \le \frac{2K}{\gamma^2} \log( \frac{2N K^2}{\delta})\sum_{i=1}^K \frac{1}{\Delta_i}   C_{\alpha,\gamma}(N_i)
\end{equation}
\begin{equation}\label{eq: upper bound reg_1}
\reg_1  \le K  \width(S) \log( \frac{2N K^2}{\delta})\sum_{i, \Delta_i < \ep_{N-1}, i \notin \pareto}  \frac{1}{\Delta_i},
\end{equation}
where
 \begin{equation*}
 N_i  = \min \left( \lceil \frac{\log(\Delta_i)}{\log(\gamma)} \rceil, N-1 \right)
  \quad  \text{and} \quad 
 C_{\alpha,\gamma}(n) = \left\lbrace \begin{aligned}
 &n \alpha^{n-1} \text{ if } \alpha= \gamma, \\
 & \frac{\gamma^{2n}(1-\alpha) + \alpha^{n}(\gamma^2 -1)  }{\gamma^2 - \alpha} \text{ otherwise. }
 \end{aligned}\right.
 \end{equation*}
\end{theorem}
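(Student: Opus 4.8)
The plan is to decompose the total regret along the structure of the algorithm, treating the peeling phase and the decoy phase separately, and to control each through a per-arm accounting argument. I would first set up the high-probability event $\mathbf{E}$ on which \emph{every} call to the comparison subroutine (Algorithm~\ref{algodirect} during peeling, Algorithm~\ref{algodecoy} in the last epoch) returns the correct answer. Since Algorithm~\ref{algounchained} at epoch $t$ makes at most $|\S_t|^2 \le K^2$ calls to $\dal$, each with error parameter $\delta/(N|\S_t|^2)$, a union bound over the at most $N$ epochs and $K^2$ calls per epoch gives that $\mathbf{E}$ holds with probability at least $1-\delta$; on $\mathbf{E}$, the invariant from Proposition~\ref{prop:unchainedregretfinal} (each $\S_{t+1}$ is an $\ep_t$-approximation of $\pareto$) is valid, and in particular every surviving arm genuinely belongs to, or is $\ep_t$-indistinguishable from, the front. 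All subsequent estimates are conditional on $\mathbf{E}$.

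For $\reg_0$ (the peeling phase), I would bound the regret contributed by each arm $i$ across the epochs in which it is still alive. An arm $i$ with regret $\Delta_i$ stays in $\S_t$ only while $\ep_t \ge \Delta_i$ roughly — once $\ep_t < \Delta_i$ it becomes distinguishable from a dominating Pareto arm and is peeled — so it participates in at most $N_i = \min(\lceil \log(\Delta_i)/\log(\gamma)\rceil, N-1)$ epochs. In epoch $t$, the direct comparison of $i$ against a pivot costs, by the confidence-interval stopping rule and the reuse-of-information mechanism, at most $O\big(\log(NK^2/\delta)\,(1/\ep_t^2 - 1/\ep_{t-1}^2)\big)$ \emph{additional} comparisons, each of which incurs regret at most $\Delta_i$ (plus the pivot's regret, which is absorbed into the pivot's own account, up to a factor $2$). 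Summing the telescoping term $\sum_{t \le N_i}(1/\ep_t^2 - 1/\ep_{t-1}^2) = 1/\ep_{N_i}^2 = \gamma^{-2N_i}$ against the number of surviving arms, which is at most $\alpha^{t}|\S|$ by the definition \eqref{eq:def alpha} of $\alpha$, and carefully exchanging the order of summation (sum over $t$ of "number of live arms at $t$" times "cost at $t$", versus sum over $i$ of "cost over $i$'s lifetime"), produces the geometric-type sum $\sum_t \alpha^{t-1}\gamma^{2(N_i - t)}$-style expression; evaluating this closed form in the two regimes $\alpha = \gamma$ and $\alpha \ne \gamma$ yields exactly the factor $C_{\alpha,\gamma}(N_i)$, and the leading constants $2K/\gamma^2$ come from the $\ep_0 = 1/2$ normalisation and the worst-case $K$ pivots. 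This gives \eqref{eq: upper bound reg_0}.

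For $\reg_1$ (the decoy phase), on $\mathbf{E}$ the set $\S_N$ entering the last epoch is an $\ep_{N-1}$-approximation of $\pareto$, hence has size at most $\width_{\ep_{N-1}}(\S) \le \width(\S)$ among non-Pareto arms; more to the point, the only non-Pareto arms that can still be present are those with $\Delta_i < \ep_{N-1}$. In the last call to Algorithm~\ref{algounchained}, each such arm is compared via Algorithm~\ref{algodecoy} to at most $\width(\S)$ pivots (the pivot set is always an antichain, hence of size at most $\width(\S)$), and by Proposition~\ref{prop:use of decoy} each decoy comparison terminates in at most $O\big(\log(NK^2/\delta)/\Delta^2\big)$ pulls; crucially $\Delta \le \ep_{N-1}/\sqrt{K}$ so $1/\Delta^2 \le K/\ep_{N-1}^2$, but the cleaner bound routes through $\Delta_i$: one shows $1/\Delta^2 = O(\width(\S)/\Delta_i^2)$ is \emph{not} what is claimed — rather the stopping happens once $0.5 + \ep \notin I$, and since $\Delta \ge$ the effective gap, the count is $O(1/\Delta^2) \le O(\width(\S)/(K\,\text{stuff}))$; I would instead directly charge each arm $i$ with $\Delta_i < \ep_{N-1}$ a cost of $\width(\S)\log(2NK^2/\delta)/\Delta^2$ comparisons each of regret at most $\Delta_i$, and use $\Delta^2 \le \Delta_i \cdot \Delta \le \Delta_i^2 \cdot (\text{const})$ — more carefully, regret per comparison is $\le \Delta_i$ while the number of comparisons is $\le \width(\S)\log(\cdot)/\Delta^2$ and one absorbs $1/\Delta^2$ against the summation index to land on $\sum_{i} \Delta_i / \Delta^2 \le K\width(\S)\log(\cdot)\sum_i 1/\Delta_i$ after bounding $\Delta \ge$ a uniform quantity; the factor $K$ enters because there are at most $K$ arms total feeding this phase. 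This gives \eqref{eq: upper bound reg_1}, and $\reg = \reg_0 + \reg_1$ by linearity.

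The main obstacle I anticipate is the bookkeeping in $\reg_0$: correctly pairing the per-epoch population bound $\alpha^t|\S|$ with the per-epoch marginal comparison cost $1/\ep_t^2 - 1/\ep_{t-1}^2$ while respecting that an arm's contributions telescope only up to its own death epoch $N_i$, and then extracting the exact closed form $C_{\alpha,\gamma}$ — in particular the case split at $\alpha = \gamma$ is where a geometric series degenerates into an arithmetic-geometric one, and getting the boundary terms $\gamma^{2n}(1-\alpha) + \alpha^n(\gamma^2-1)$ right requires care with which endpoint of the telescoping sum survives. The pivot-charging (ensuring the pivot's regret in each comparison is not double-counted) and the interplay between "additional comparisons" and reuse of information are the secondary subtleties; both are handled by the factor-$2$ slack already visible in the constants.
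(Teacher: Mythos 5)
Your overall route is the same as the paper's: restrict to the event on which every subroutine call is correct (union bound via Proposition \ref{prop:unchainedregretfinal}), charge each arm $i$ its per-epoch marginal cost $1/\ep_t^2-1/\ep_{t-1}^2$ over its lifetime $t\le N_i$, combine with $|\S_t|\le\alpha^{t-1}K$ and $\ep_{t+1}=\gamma\ep_t$, and evaluate the resulting geometric (resp.\ arithmetic-geometric when $\alpha=\gamma^2$) sum to obtain $C_{\alpha,\gamma}(N_i)$. The $\reg_0$ sketch is a faithful outline of that computation and I see no gap there beyond the closed-form bookkeeping you already flag.

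The $\reg_1$ passage, however, is garbled at exactly the step that produces the factor $K$, and your final stated justification is wrong. The factor $K$ does \emph{not} come from ``there are at most $K$ arms feeding this phase''; counting arms would only bound the number of summands, not convert $\Delta_i/\Delta^2$ into $K/\Delta_i$ term by term. The correct chain is: each non-Pareto arm surviving to the decoy phase has $\Delta_i<\ep_{N-1}$, is compared to at most $\width(\S)$ pivots, each decoy comparison costs $\log(2NK^2/\delta)/\Delta^2$ pulls of regret $\le\Delta_i$ each, hence $\reg_1\le\width(\S)\log(2NK^2/\delta)\sum_i\Delta_i/\Delta^2$; then
\begin{equation*}
\frac{\Delta_i}{\Delta^2}=\frac{\Delta_i^2}{\Delta^2}\cdot\frac{1}{\Delta_i}\le\frac{\ep_{N-1}^2}{\Delta^2}\cdot\frac{1}{\Delta_i}\le\frac{K}{\Delta_i},
\end{equation*}
where the last step uses the theorem's hypothesis $\ep_{N-1}\le\Delta\sqrt{K}$. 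Note also that you wrote this hypothesis backwards (``$\Delta\le\ep_{N-1}/\sqrt{K}$''); the hypothesis gives $\Delta\ge\ep_{N-1}/\sqrt{K}$, which is the direction actually needed for $\ep_{N-1}^2/\Delta^2\le K$. With that step repaired, your argument coincides with the paper's.
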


In the previous theorem, $N_i$ represent the number of peeling step where the arm $i$ is present, while $C_{\alpha,\gamma}(N_i)$ represent the cost of doing the peeling for arm $i$. It is worth noting that $C_{\alpha,\gamma}(n) \le 1$ and is increasing in $ \alpha$. This reflect the fact that small $\alpha$ are representative of an efficient pruning (many arms removed at each step).

\paragraph{Opposite constraints on $\ep$.}
Theorem~\ref{thm:peeling regret} is an upper bound on the number of comparisons required to find the Pareto  front. This bound is tight in the 
(worst-case) scenario where all the arms are $\Delta$-indistinguishable, i.e. peeling cannot eliminate any arm. In that case, any comparison done during the peeling is actually wasted, and the lower bound on $\ep_t$  \eqref{eq: lower bound on ep} allows to upper bound the number of comparisons made during the peeling step to recover a $K\width(S)$  dependency in the upper bound, instead of $K^2$. On the other hand, a significant amount of peeling is required to obtain a reasonable upper bound on the incurred regret: the number of comparisons using decoys is very high $( \approx 1/ \Delta^2)$ and is the same for every arm, regardless of its regret. So it is important that only near-optimal arms remain during the decoy step, hence the upper bound on $\ep_t$. In order to satisfy both constraints, $\ep_N$ must be chosen in $\left[ \sqrt{K/\width(S)} \Delta, \sqrt{K} \Delta\right].$

\section{Related Works}
\label{sec:relatedworks}

There is an actual connection between
our work and studies from social psychology. In particular, \citet{tversky1981framing} issued one of
the reference papers on the choice problem---which pertains to comparisons, in our framework---
 for real-life problems; they introduced the idea that alternatives may influence the perceived value 
 of items. This idea had been taken one step further by \citet{huber1982adding}, who introduced 
 and formalized the idea of decoys. They specifically argued 
 that introducing {\em dominated alternatives}, i.e. decoys, may increase the probability
  of the original item to be selected: if $A,B$ and $A'$ are alternatives, then
$\proba\left( \text{select }A\text{ among } A,B\right)  < \proba\left( \text{select } A \text{ among } A,A',B\right)$.
This generated an abundant literature (see \citet{ariely1995seeking,sedikides1999contextual} and references therein)
on works that studied the effect of decoys and their uses in various fields.

From the computer science literature, we must mention the work of \citet{daskalakis2011sorting}, which
addresses the problem on selection and sorting on posets and provides relevant data structures and 
accompanying analyses for computing on posets. Their results come down to classical results when totally
ordered sets are used. Also, there might be yet other connections to draw between our work
and that of \citet{feige1994computing} who tackle the problem of sorting with noisy comparisons; note however
that they assume there is a total order on the items they work on and the connection to be made with the
present work would be to identify how this assumption may be weakened, if not removed.

Finally, we must discuss how our contribution separates from papers on dueling bandits.
If the seminal paper of \citet{yue2012if} promotes algorithms, namely the Interleaved Filter
 algorithms,
that exhibit optimal information-theoretic regret bounds, the authors assume the existence of 
a total order between the arms together as strong stochastic transitivity
and (relaxed) stochastic triangle inequality. Since then, numerous methods have been proposed 
 to relax those additional assumptions, including  \citep{yue2011btm,ailon2014reducing,zoghi2014rucb,zoghi2015mergerucb}.
Other approaches exist that do not assume the existence of a Condorcet winner, 
such as \citep{urvoy2013savage,busa2013top,zoghi2015copeland} but, to the best of our knowledge,
we provide the first contribution that studies the framework where arms may be {\em incomparable}.

\section{Numerical Simulations}
\label{sec:experiments}

\begin{figure*}[tb]\label{fig:results}
\begin{center}
\includegraphics[scale=0.18]{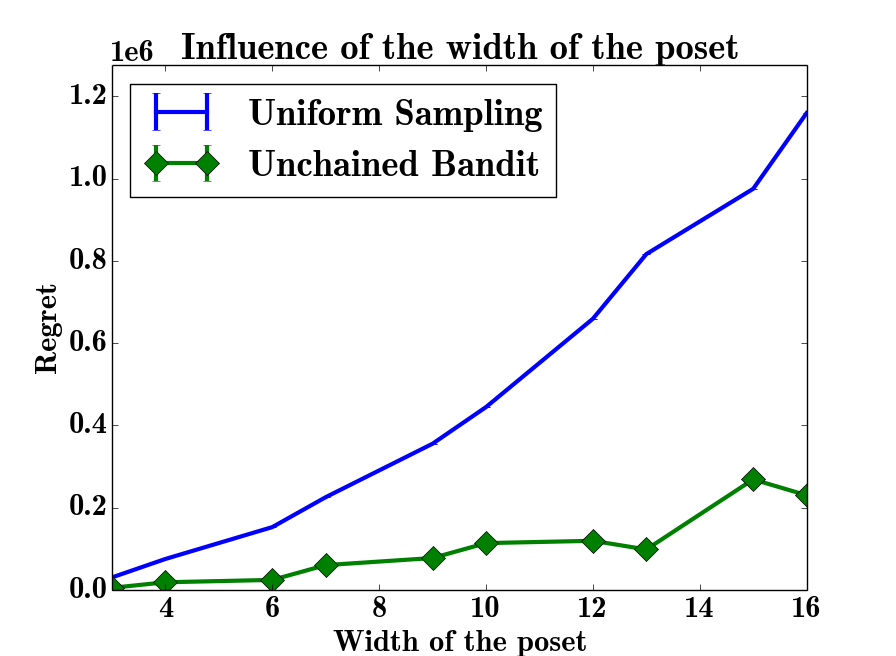}
\includegraphics[scale=0.18]{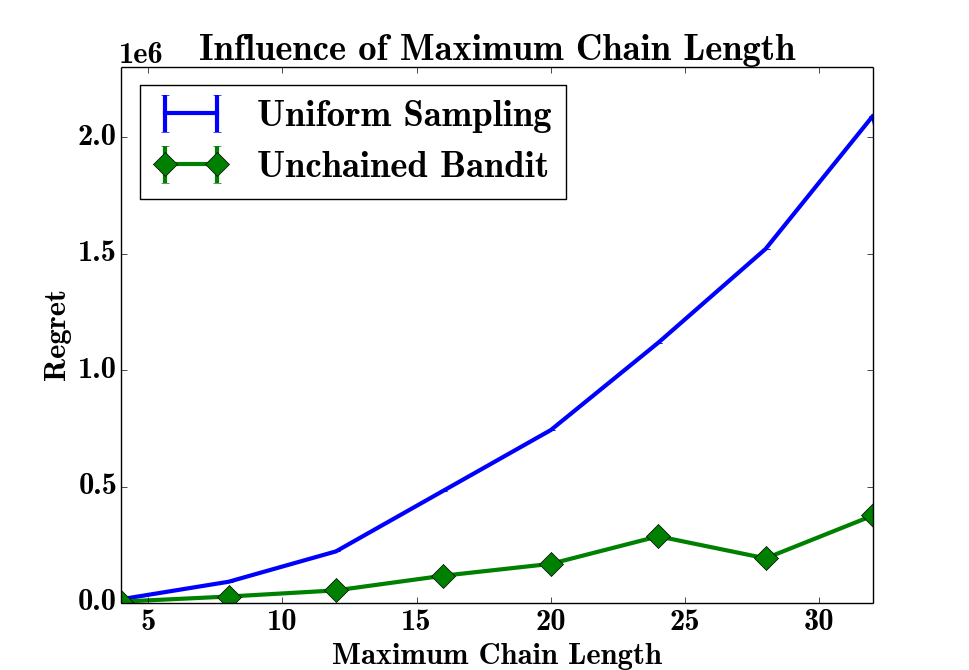} 
\includegraphics[scale=0.18]{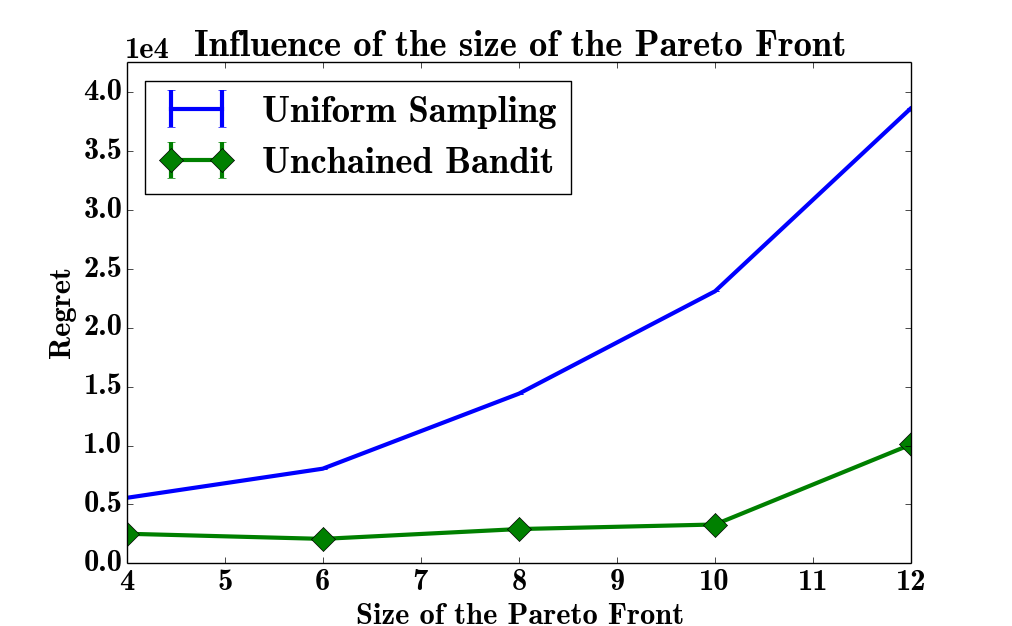}
\end{center}
\caption{Time necessary to reach a conclusion for \algochain and  \algopeeling compared to {\small \sf UniformSampling}, when the structure of the poset varies. Dependence on (left:) width of the poset, (center:) height of the poset and (right:) size of the Pareto front. \liva{Don't talk about \algochain.}}
\end{figure*}
In this section we experimentally evaluate \algopeeling. We did not compare our algorithm to dueling bandits algorithms from the literature,
 as a) they fail to consider the incomparability information 
  and b) they are generally designed to return only one \emph{best} element. Instead, we studied the performances of \algopeeling on simulated data (Section 5.1), and we applied it to an existing film rating database (Section 5.2).

\subsection{Simulated Poset}
First we confront \algopeeling with randomly generated posets,  with different sizes, widths and heights. In order to give a baseline value, we use a simple algorithm,   {\small \sf UniformSampling} inspired from the successive elimination algorithm \cite{even2006successiveelimination}, which
    simultaneously compares  all possible pairs of arms  until one of the arms appears suboptimal, at which point
	 it is removed from the set of selected arms. When only $\Delta$-indistinguishable elements remain, it uses $\Delta$-decoys. 

Given the size $p>0$ of the Pareto front, the desired width $w\ge p$ and the height $h>0$, the posets are generated as follows:
 first, a Pareto front of size $p$ is created. Then $w$ chains of length $h-1$ with no common elements are added. Finally, the top of the chains are connected to a random number of elements of the Pareto front. This creates the structure of the poset (i.e. the partial order $\succ$).  Finally, the exact values of the $\gamma_{ij}$'s are obtained from a uniform distribution, conditioned to satisfy the partially (or fully) observable framework.
When needed, $\Delta$-decoys are created according to Proposition~\ref{prop: add decoy}.
For each experiment reported on Figure~1, we changed the value of one parameter, and left the other to 
their default values ($p=5$, $w=2p$, $h=10$). The results are averaged over ten runs. By default, we use $\delta= 1/1000$ and
 $\Delta=1/100$. We also set $\gamma=0.9,$ $\ep_0=0.5$ and $\ep_N= \sqrt{K} \Delta.$ 
 
We note that for partially observable posets, \algopeeling produces much better results than {\small \sf UniformSampling} and its advantage increases with the complexity of the problem.

\subsection{MovieLens Dataset}

To illustrate the example of the films recommendation system developed in the introduction, we chose to apply \algopeeling to the 20 millions items MovieLens dataset (\cite{harper2015movielens}).

To simulate a dueling bandit on a poset  we proceed as follows: we remove all films with less than $50000$ evaluations, thus obtaining 159 films, represented as arms. Then, when comparing two arms, we picke at random a user which has evaluated {\em both} films, and compare those evaluations (ties are broken with an unbiased coin toss). Since the decoy tool cannot be used in an already existing dataset, we restrict ourselves to finding an $\ep$-approximation of the Pareto front, with $\ep=0.05$.
Then, \algopeeling is run  with parameters $\gamma=0.9$, $\ep_0=0.5$, $\ep_N=\ep$, $\delta=0.001$. 

There is no known ground truth for this experiments, so no regret estimation can be provided. Instead, the resulting Pareto front, which contains 5 films, is listed in Table \ref{tab: films names}, and compared to the five films among the original 159 with the highest average score.
It is interesting to note that three films are present in both list, which reflects the fact that the {\em best } films in term of average score have a high chance of being in the Pareto Front. On the other hand, the films contained in the Pareto front are more diverse in term of genre, which is expected of a Pareto front.
For instance, the sequel of the film "The Godfather" (hence very close to the original regarding  genre) has been replaced by a a film of a totally different genre.  It is important to remember that  \algopeeling \emph{does not have access to any information about the genre of a film} and its results are based solely on the pairwise evaluation of the user, and thus this result illustrates the effectiveness of our approach for the learning of the hidden poset.

\begin{figure}\label{tab: films names}
\centering
\begin{tabular}{c|c}
Pareto Front & Highest average score \\
\hline
\\
Pulp Fiction   & Pulp Fiction  \\
Fight Club & The Usual Suspect \\
The Shawshank Redemption & The Shawshank Redemption \\
The Godfather & The Godfather \\
Star Wars Episode V & The Godfather: Part II 
\end{tabular}
\caption{Comparison between the five films with the highest average score (right column) and the five films of the $\ep-$pareto set (left column)  }
\end{figure}


\section{Conclusion}
\label{sec:conclusion}
We studied an extension of the dueling bandit problem to the poset framework,
 which raised the problem of $\ep$-indistinguishability. We presented a new algorithm, \algopeeling , 
 which tackles the partially observable settings, and we provided theoretical performance guarantee
for its ability to identify the Pareto front. Future work might include the study of the influence of additional hypothesis on the structure of the poset, such as when the poset is actually a lattice or upper semi-lattice. In this case, different strategies of sampling might lead to even more efficient algorithms.

\newpage  
\bibliographystyle{plainnat}
\bibliography{unchainedbandit}
\newpage
\appendix

\section{Fully Observable Posets, \algochain}
\label{sec: algochain}

\begin{algorithm}[t!] 
\caption{\algochain}
\begin{algorithmic}\label{algochain}
\STATE \underline{\textbf{Given}} $(\S,\succ)$ a poset with $K$ elements, $\delta>0$, $\dal(.,.)$ a dueling algorithm with input a totally ordered set and a confidence value. 

\STATE \underline{\textbf{Initialisation}} Set $\Sh=\S$, $\paretoh=\emptyset$.
\WHILE{$\Sh \neq \emptyset$}
	\STATE \textbf{Extract a maximal chain from $\Sh$:} 
		\STATE Choose $p \in \Sh$ at random, initialize $\C=\{p\}$
		\STATE $\forall q \in \Sh$, if $\C \cup \{q\}$ is a chain, set $\C \leftarrow \C \cup \{q\}$
		\STATE $\Sh \leftarrow \Sh \setminus \C$
	\STATE \textbf{Compute the maximal element of  $\C$:} 
		\STATE Obtain  $\hat{p}=\dal(\C,\delta/K)$, update $\paretoh \leftarrow \paretoh \cup \{\hat{p}\}$
	\STATE \textbf{Prune $\Sh:$}
		\STATE $\forall q \in \Sh$, if $ \hat{p}$ and $q$ are comparable, update $\Sh \leftarrow \Sh \setminus \{q\}$
\ENDWHILE

\textbf{RETURN} $ \paretoh  $
\end{algorithmic}
\end{algorithm}
Here we address the fully observable setting for the dueling bandits.

\paragraph{Fully observable posets.}
	A $K$-armed Dueling bandit on a fully observable poset $\S=\{1,\ldots,K\}$ is a dueling bandit problem such that  if $i \parallel j$, and the agent pulls the pair $(i,j)$, then the information of non-comparability is returned. This property is referred as {\em Full Observability}.

An efficient way to address this setting is to reconstruct the maximal chains of $\S$ by using the full observability property. Since every chain defines a total order, it is possible to use any total order Dueling Bandit algorithm on each of them. By carefully pruning the chain to avoid unnecessary comparisons, it is possible to efficiently recover the Pareto front of $\S$, with performances nearly as good as in the Totally order setting. This approach is detailed and analysed below.



Here, the agent may access the comparability information about any pair, 
and can thus retrieve the chains of $\S$.
The following lemma states a simple property of maximal chains, that is essential to \algochain.
\begin{lemma}\label{lem:maximalchain}
Every maximal chain $\C$ of a poset $\S\neq \emptyset$  contains a unique maximal element of $\S$.
\end{lemma}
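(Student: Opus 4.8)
The plan is to prove the two halves separately: existence of a maximal element of $\S$ inside any maximal chain $\C$, and uniqueness. Both are short finite‑combinatorics arguments, using only the poset axioms and the definitions of maximal chain, maximal element, and Pareto front recalled earlier.

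\textbf{Existence.} Since $\S$ is finite and $\C\subset\S$ is nonempty (a maximal chain is nonempty because $\{p\}$ is a chain for any $p\in\S$), $\C$ is a finite totally ordered set, hence has a greatest element; call it $m$, so $m\succcurlyeq c$ for all $c\in\C$. I claim $m$ is a maximal element of $\S$, i.e.\ $m\in\pareto(\S)$. Suppose not: then there is $a\in\S$ with $a\succ m$ (the only way $m$ can fail to be maximal, by the definition of maximal element, is that some $a$ strictly dominates it). In particular $a\notin\C$, since $m$ is the greatest element of $\C$ and $a\succ m$ would make $a$ strictly larger. Now consider $\C\cup\{a\}$: for any $c\in\C$ we have $m\succcurlyeq c$, and $a\succ m$, so by transitivity $a\succcurlyeq c$; hence $a$ is comparable to every element of $\C$, so $\C\cup\{a\}$ is a chain strictly containing $\C$, contradicting the maximality of $\C$. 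Therefore $m$ is a maximal element of $\S$ lying in $\C$.

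\textbf{Uniqueness.} Suppose $m_1,m_2\in\C$ are both maximal elements of $\S$. Since $m_1,m_2\in\C$ and $\C$ is a chain, $m_1$ and $m_2$ are comparable, say $m_1\succcurlyeq m_2$. But $m_2$ is a maximal element of $\S$, so it cannot be strictly dominated by anything; since $m_1\succcurlyeq m_2$, antisymmetry forces $m_1=m_2$. Hence $\C$ contains exactly one maximal element of $\S$.

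\textbf{Main obstacle.} There is no real analytic difficulty here; the only point requiring a little care is making sure the relevant ``maximal'' notions are not conflated: the greatest element $m$ of the chain $\C$ (which exists purely because $\C$ is a finite total order) must be argued to be a maximal element of the ambient poset $\S$, and that step is exactly where the maximality of the chain $\C$ is used. I would state this cleanly so the reader sees that the finiteness of $\S$ and the two uses of maximality (of $\C$ and of $m$) are the only ingredients.
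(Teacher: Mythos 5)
Your proof is correct and follows essentially the same route as the paper's: take the greatest element of the finite chain $\C$, show by contradiction (using the maximality of $\C$ and transitivity) that it is a maximal element of $\S$, and derive uniqueness from comparability within the chain together with antisymmetry. No gaps.
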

\begin{proof}
The result follows from the transitivity property of the poset. The complete proof can be found in the proof section of the supplementary material.
\end{proof}
\begin{remark}\label{remark on chains}
By definition, it is easy to see that, conversely, 
for every maximal element $p$, there exists a (non-necessarily unique)
 maximal chain $\C$ of $\S$ such that $p \in \C$.
\end{remark}

To explore a chain in \algochain the agent has to use a dueling bandit 
algorithm $\dal$ devised for totally ordered set as a building block. 
We denote by $\dal(\C,\delta)$ the maximal element of a totally ordered set $\C$ 
returned by $\dal$ applied on the set $\C$ with confidence parameter $\delta$. 

Given $\dal$, the agent proceeds as follows. She initializes $\Sh= \S$--- 
$\Sh$ contains the elements that have not been processed yet---and $\paretoh=\emptyset$, the candidates for the Pareto front,
  and, up until $\Sh$ is empty, the agent successively  a) extract a maximal chain of $\Sh$,
 b) computes the maximal element of $\C$ (a totally ordered subset) by using $\dal$, and c) prune $\Sh$, 
 i.e. eliminates all the elements of $\Sh$ which are comparable to $\hat{p}$.
 
The upper bound on the number of pulls for \algochain to provide the Pareto front
is given by the next theorem.
\begin{theorem}\label{thm:algochain}
Assume that $\dal(\C,\delta')$ correctly returns the maximal element of $\C$ with probability at least $1- \delta'$ 
using at most $\mathcal{T}(\dal(\C,\delta'))$ pulls. Then \algochain returns the Pareto front of $\S$ with probability 
at least $1-\delta$ with at most $\mathcal{T}$ comparisons, where
$$\mathcal{T} \le \O \left( K^2 + \sum_{c\in \pareto} \max_{\C \text{ a chain containing }c} \mathcal{T}(\dal(\C,\delta \frac{\vert C \vert}{K})) \right).$$
\end{theorem}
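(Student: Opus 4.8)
The plan is to analyze the \algochain loop iteration by iteration, establishing correctness first (conditioned on a good event where all calls to $\dal$ succeed) and then bounding the number of comparisons. First I would set up the good event $\event$: the algorithm makes at most $K$ calls to $\dal$ (one per iteration of the while loop, since each iteration removes at least the extracted chain from $\Sh$, and the chain is nonempty), and each call is made with confidence parameter $\delta |\C|/K$. A union bound over these calls — using $\sum_{\C} |\C| \le K$ since the extracted chains are disjoint — shows $\proba(\event) \ge 1-\delta$. From here on I work on $\event$.

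Next comes correctness. I would argue by a loop invariant: at the start of each iteration, (a) $\paretoh$ consists exactly of the maximal elements of $\S$ that are comparable to some already-removed element, (b) every element of $\S \setminus \Sh$ that is not in $\paretoh$ is dominated by some element of $\paretoh$, and (c) $\pareto(\S) \cap \Sh = \pareto(\Sh)$, i.e. no maximal element of $\S$ has been wrongly discarded. The key structural facts are Lemma~\ref{lem:maximalchain} (the extracted maximal chain $\C$ of $\Sh$ contains a unique maximal element of $\Sh$, which $\dal$ identifies correctly on $\event$) and Remark~\ref{remark on chains}. The pruning step removes from $\Sh$ exactly the elements comparable to $\hat p$; I must check this never removes a maximal element of $\S$ other than $\hat p$ itself — this holds because two distinct maximal elements are incomparable, and $\hat p$ is maximal in $\S$ (not just in $\Sh$): any $q \succ \hat p$ in $\S$ would have survived to $\Sh$ by the invariant, contradicting maximality of $\hat p$ in the chain-closure. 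When the loop terminates ($\Sh = \emptyset$), invariants (a) and (c) give $\paretoh = \pareto(\S)$.

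For the complexity bound, each iteration does three things: extracting a maximal chain from $\Sh$ costs $O(|\Sh|^2) = O(K^2)$ comparability checks in the worst case (though one can be more careful), computing $\hat p = \dal(\C, \delta|\C|/K)$ costs at most $\mathcal{T}(\dal(\C, \delta|\C|/K))$ pulls, and pruning costs $O(|\Sh|) = O(K)$ checks. Since there is one iteration per element of the returned $\paretoh = \pareto$, there are $|\pareto| \le K$ iterations, so the chain-extraction and pruning contribute $O(K \cdot K^2) = O(K^2)$ in total — wait, this would be $O(K^3)$; the cleaner accounting is that across all iterations the extracted chains are disjoint, so the total cost of all chain extractions is $O(K^2)$, and similarly pruning over all iterations is $O(K^2)$. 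The $\dal$ cost: each maximal element $c \in \pareto$ is the $\hat p$ of exactly one iteration, whose chain $\C$ contains $c$, so that iteration's $\dal$ cost is at most $\max_{\C \ni c} \mathcal{T}(\dal(\C, \delta|\C|/K))$. Summing over $c \in \pareto$ and adding the $O(K^2)$ bookkeeping term yields the claimed bound.

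The main obstacle I anticipate is the correctness argument — specifically, verifying rigorously that $\hat p$ is maximal in the \emph{whole} poset $\S$ and not merely in the current $\Sh$, since only then does the pruning step correctly avoid discarding other maximal elements and correctly eliminate all non-maximal elements dominated by $\hat p$. This requires carefully threading the invariant (c) through the pruning step: any element comparable to $\hat p$ and removed at this iteration is either $\hat p$ itself or (being comparable to a maximal element) not maximal in $\S$, hence safely discarded. The complexity bound is then routine bookkeeping via disjointness of the extracted chains.
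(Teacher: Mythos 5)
Your proposal is correct and follows essentially the same route as the paper: a good event over the $\dal$ calls, a loop invariant whose crux is that the extracted chain's maximal element is maximal in all of $\S$ (via incomparability of distinct maximal elements and the fact that any dominator would still survive in $\Sh$), exactly one $\dal$ call per element of $\pareto$, and disjointness of the extracted chains to bound the bookkeeping comparisons by $O(K^2)$. The only cosmetic differences are your phrasing of the invariant and your union bound using confidence $\delta|\C|/K$ with $\sum_{\C}|\C|\le K$ (matching the theorem statement) versus the paper's $\delta/K$ per call with $|\pareto|\le K$ calls; both are valid.
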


\begin{proof}
The proof is divided into two parts: first, we only consider the event $\mathbf{E}_1$ 
where during the execution of Algorithm~\ref{algochain}, each call to  $\dal (\C, \delta/\S)$ returns 
the correct answer (the maximal element of $\C$), and we prove that on $\mathbf{E}_1$, Theorem~\ref{thm:algochain} is correct.
 Second, using a bound on the number of calls to $\dal$ performed on $\mathbf{E}_1$, we prove 
 that $\proba(\mathbf{E}_1) \ge 1- \delta$. 

The following invariant holds on $\mathbf{E}_1$\\
\textbf{Invariant:} at the beginning of each iteration of the while loop, we have 
\begin{equation}\label{eq : invariant for chain}
\paretoh \subset \pareto \subset \paretoh \cup \Sh \text{ and } \forall p \in \paretoh, \forall q \in \Sh, 
 p \parallel q 
\end{equation}
A consequence is that $\paretoh$ increases by one element at 
each iteration of the while loop, and thus $\dal$ is called exactly $\vert \pareto \vert$ times, 
after which \eqref{eq : invariant for chain} implies $\paretoh = \pareto$, hence
\begin{align*}
\proba(\mathbf{E}_1^C) & \le \vert \pareto \vert \delta/\vert S \vert \le \delta.
\end{align*}

The number of additional comparisons required to build the chain is upper bounded by $K^2$,
 as all pairs of arms have to be compared at most once. Hence, the upper bound on $\mathcal{T}$ 
 is derived from the fact that at each iteration, a chain with a different element of $c \in \pareto$ 
 is considered. All the details of the proof can be found in the devoted section of the supplementary material.
\end{proof}

The following corollary illustrates Theorem~\ref{thm:algochain} when 
$\dal$ is the Interleaved Filter algorithm \cite{yue2012if}.
\begin{corollary}\label{cor:algochain and if2}
Assume that $(\S,\succe)$ satisfies the strong stochastic transitivity and the triangle inequality of \citet{yue2012if}.
Then \algochain using the IF2 algorithm as $\dal$ will return the correct Pareto front $\pareto$ with probability at least $1-\delta$ in at most $T$ steps, where 
$$ T \le \O \left(K^2  + \frac{K}{(d(\pareto) )^2} {\log(K^2/\delta)}\right). $$

\end{corollary}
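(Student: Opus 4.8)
The plan is to derive Corollary~\ref{cor:algochain and if2} as a direct instantiation of Theorem~\ref{thm:algochain}, plugging in the known guarantees for the IF2 algorithm of \citet{yue2012if} as the subroutine $\dal$. First I would recall that under strong stochastic transitivity and the stochastic triangle inequality, IF2 run on a totally ordered set $\C$ of size $|\C|$ with confidence $\delta'$ returns its maximal (Condorcet) element with probability at least $1-\delta'$ using at most $\mathcal{T}(\dal(\C,\delta')) \le \O\!\left(\frac{|\C|}{\Delta_{\C}^2}\log(|\C|/\delta')\right)$ comparisons, where $\Delta_{\C}$ is the relevant gap within the chain. The point is that each chain $\C$ appearing in \algochain is a sub-poset of $\S$, so it inherits strong stochastic transitivity and the triangle inequality, hence IF2's hypotheses are met on every chain the algorithm extracts.

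Next I would bound the per-chain cost uniformly. For a chain $\C$ containing a Pareto element $c$, the smallest gap encountered is at least $d(\pareto)$ by definition of $d(\pareto)=\min\{\gamma_{ij}: i\in\pareto,\ j\in\S\setminus\pareto,\ i\succ j\}$ — indeed within any chain through $c$, the maximal element $c$ beats every other element of that chain by a margin at least $d(\pareto)$ (this uses that all non-maximal elements of the chain lie outside $\pareto$, or are dominated by $c$ with gap controlled by $d(\pareto)$; one may need the triangle inequality / SST to lower-bound all the gaps $\gamma_{cj}$ along the chain by $d(\pareto)$). Therefore $\mathcal{T}(\dal(\C,\delta|\C|/K)) \le \O\!\left(\frac{K}{d(\pareto)^2}\log(K^2/\delta)\right)$, using $|\C|\le K$ and $\log(|\C|/(\delta|\C|/K)) = \log(K/\delta) \le \log(K^2/\delta)$.

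Then I would substitute into the bound of Theorem~\ref{thm:algochain}:
\begin{align*}
\mathcal{T} &\le \O\!\left( K^2 + \sum_{c\in\pareto}\max_{\C\ni c}\mathcal{T}\big(\dal(\C,\tfrac{\delta|\C|}{K})\big)\right)
\le \O\!\left( K^2 + |\pareto|\cdot\frac{K}{d(\pareto)^2}\log(K^2/\delta)\right),
\end{align*}
and since $|\pareto| = \width(\S) \le K$ but more sharply we can absorb it, noting the statement only claims a $\frac{K}{d(\pareto)^2}\log(K^2/\delta)$ term — so I would argue $|\pareto|$ chains collectively touch at most $K$ arms and the IF2 costs across disjoint-ish chains telescope, or simply observe that the claimed bound treats $|\pareto|\le K$ but the refined accounting in the supplementary material gives the stated $\frac{K}{d(\pareto)^2}$ (the chains partition $\S$, so the total IF2 work is $\sum_{\C}\O(|\C|/d(\pareto)^2 \log(K^2/\delta)) = \O(K/d(\pareto)^2\log(K^2/\delta))$). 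Combining yields $T \le \O\!\left(K^2 + \frac{K}{d(\pareto)^2}\log(K^2/\delta)\right)$, as claimed.

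The main obstacle I anticipate is the gap bookkeeping: showing that $d(\pareto)$ is the right quantity to lower-bound \emph{every} comparison gap that IF2 pays for inside an extracted chain. IF2's complexity depends on the gaps between consecutive arms in the order, not just on the gap to the best arm, so one must invoke strong stochastic transitivity to convert the global guarantee $\gamma_{cj}\ge d(\pareto)$ for the top element into usable lower bounds on all intermediate gaps — or, more carefully, re-derive IF2's bound in the form $\O(|\C|/\Delta_{1,2}^2 \log(|\C|/\delta'))$ where $\Delta_{1,2}$ is the gap between the best and second-best, which under SST is itself $\ge$ something controlled by $d(\pareto)$. A secondary subtlety is the union bound: \algochain calls $\dal$ exactly $|\pareto|$ times (by the invariant in Theorem~\ref{thm:algochain}'s proof) each with confidence $\delta|\C|/K$, and one must check $\sum_{\C}\delta|\C|/K \le \delta$ since the chains are disjoint and $\sum|\C|\le K$ — this is where the $|\C|/K$ scaling of the confidence is used, and it should go through cleanly.
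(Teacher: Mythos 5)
Your proposal is correct and follows essentially the same route as the paper: instantiate the generic bound with IF2's sample-complexity guarantee, lower-bound every gap $\gamma_{c_t c'}$ inside an extracted chain by $d(\pareto)$ (which follows directly from the definition of $d(\pareto)$, since the unique maximal element of each chain is in $\pareto$ and all other chain elements are dominated and lie outside $\pareto$), and then use the fact that the extracted chains are pairwise disjoint and cover $\S$ so that $\sum_t \vert \C_t\vert \le K$, avoiding the extra $\vert\pareto\vert$ factor. The partition argument you settle on at the end is exactly the paper's final step, so no genuinely different machinery is involved.
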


Interestingly, when $\S$ is totally ordered, there is {\em one} maximal chain, $\S$, and \algochain reduces to $\dal$.


\section{Additional Numerical Simulations}
\label{sec:experiments appendix}

\begin{figure*}[tb]\label{fig:results appendix}
\begin{center}
\includegraphics[scale=0.2]{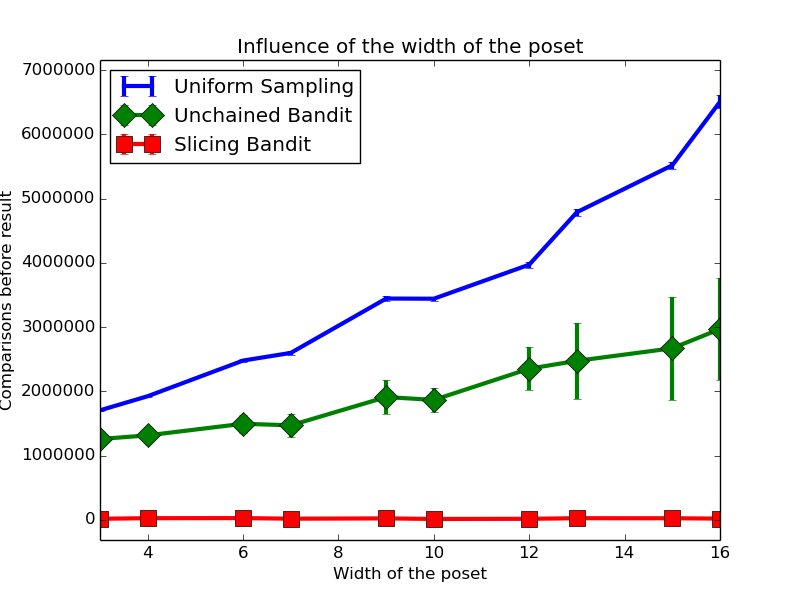}
\includegraphics[scale=0.2]{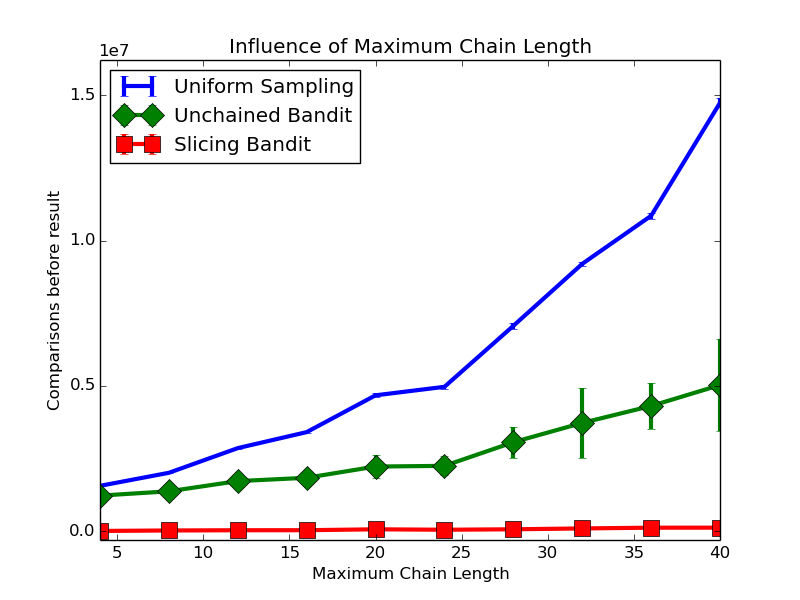} 
\includegraphics[scale=0.2]{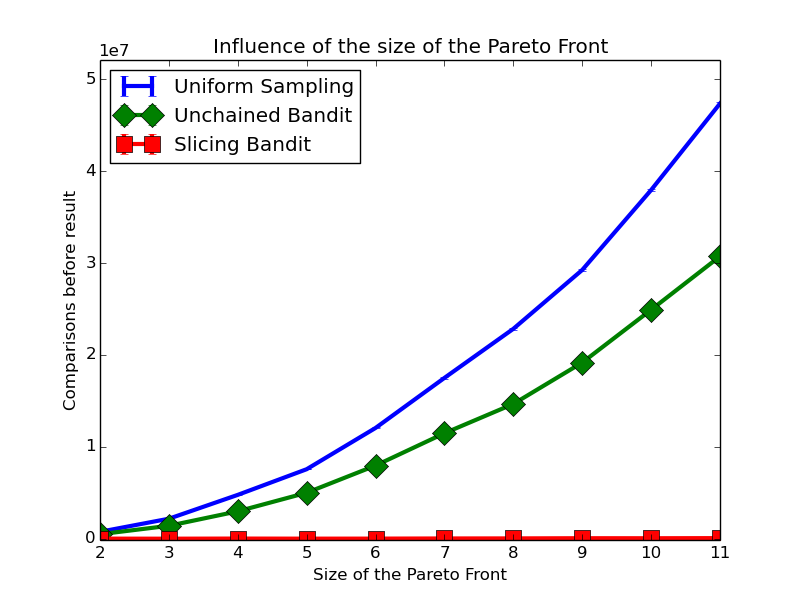}
\end{center}
\caption{Time necessary to reach a conclusion for \algochain and  \algopeeling compared to {\small \sf UniformSampling}, when the structure of the poset varies. Dependence on (left:) width of the poset, (center:) height of the poset and (right:) size of the Pareto front.}
\end{figure*}

The following experiments evaluate the relative efficiency of \algochain and \algopeeling, we confront them with randomly generated posets,
 with different sizes, widths and heights.

Given the size $p>0$ of the Pareto front, the desired width $w\ge p$ and the height $h>0$, the posets are generated as follows:
 first, a Pareto front of size $p$ is created. Then $w$ chains of length $h-1$ with no common elements are added. Finally, the top of the chains are connected to a random number of elements of the Pareto front. This creates the structure of the poset (i.e. the partial order $\succ$).  Finally, the exact values of the $\gamma_{ij}$'s are obtained from a uniform distribution, conditioned to satisfy the partially (or fully) observable framework.
When needed, $\Delta$-decoys are created according to Proposition~\ref{prop: add decoy}.

For each experiment reported on Figure~\ref{fig:results appendix}, we changed the value of one parameter, and left the other to 
their default values ($p=5$, $w=2p$, $h=10$). The results are averaged over ten runs. By default, we use $\delta= 1/1000$ and
 $\Delta=1/100$. The $(\ep_t)_t$ are generated following the procedure presented in Section~\ref{sec:Contribution} with $\Delta_0=0.25$.

We did not compare our algorithms to dueling bandits algorithms from the literature,
 as a) they fail to consider the incomparability information 
  and b) they are generally designed to return only one \emph{best} element. Instead, we use a baseline algorithm,
   {\small \sf UniformSampling} inspired from the successive elimination algorithm \cite{even2006successiveelimination}, which
    simultaneously compares  all possible pairs of arms  until one of the arms appears suboptimal, at which point
	 it is removed from the set of selected arms. When only $\Delta$-indistinguishable elements remain, it uses $\Delta$-decoys. 

We note that \algochain clearly outperforms the other algorithms by a wide margin, thanks to the access to the comparability information and the careful management of chains.
For partially observable posets, \algopeeling produces much better results than {\small \sf UniformSampling} and its advantage increases with the complexity of the problem.

\section{Appendix : Extended Proofs}

\subsection*{Proof of Lemma 3.1}
\textbf{Existence:} Since $\C$ is a finite totally ordered set, it admits an unique maximal element. Let $c \in \C$ be the maximal element of $\C$. We use reductio ad absurdum. Suppose that $c$ is not a maximal element of $\S$. By definition of maximal element, $\exists c' \in \S $ such that $c' \succ c$. But $\forall c'' \in \C$, we have $c \succe c''$, then by transitivity $c'\succ c''$. Hence $\C \vee \{c' \}$ is a chain which strictly contains $\C$, which contradicts the fact the $\C$ is a maximal chain.

\textbf{Uniqueness:} let $c,c' \in \C$ be two maximal element of $\S$. Since $\C$ is a chain, $c$ and $c'$ are comparable. Since $c$ is a maximal element, we have $c \succe c'$. The same is true for $c'$, hence the conclusion.


\subsection*{Proof of Theorem 1}

Let $\mathbf{E}_1$ be the event where during the execution of Algorithm 1, each call to  $\dal (\C, \delta/K)$ return the correct answer (the maximal element of $\C$).

The proof is divided into two steps : First, we are going to prove that on $\mathbf{E}_1$, Theorem 1 is correct. Then, using an upper bound of the number of call to $\dal$ done on the event $\mathbf{E}_1$, we will prove that $\proba(\mathbf{E}_1^C) \le \delta$, hence the conclusion.

On $\mathbf{E}_1$,consider the following invariant :\\
\textbf{Invariant :} At the beginning of each iteration of the while loop, we have 
\begin{equation}\label{eqp : increase pareto}
\paretoh \subset \pareto
\end{equation}
\begin{equation}\label{eqp: paretoh is uncomparable to sh}
\forall p \in \paretoh,\quad \forall q \in \Sh,\quad 
 p \parallel q
\end{equation}
\begin{equation}\label{eqp: sh contains the remaining of pareto}
\pareto \subset \paretoh \cup \Sh
\end{equation}

It is easy to see that the invariant is true at the beginning of the algorithm, because at the initialisation, $\paretoh= \emptyset$  and $\Sh=\S.$

Assume that the invariant is true at the beginning of the loop $t+1$, and denote by $\Sh_t$,$\paretoh_t$ the value of $\Sh$,  $\paretoh$ at the end of loop $t$.

Since the algorithm has not stopped, $\Sh_t$ is not empty.
 By definition, the subset $\C$ constructed by the algorithm is a maximal chain of $\Sh$. Since $\C$ is a non empty finite totally ordered set, it admits an unique maximum element $c$.

We prove that 
\begin{equation}\label{eqp: c in pareto}
c\in \pareto
\end{equation}
with reductio ad absurdum (RAA for short). Assume that $c \notin \pareto$. Then $\exists c' \in \pareto$ such that $c' \succ c.$ Since $\C$ is a maximal chain of $\Sh$, it implies that $c' \notin \Sh$. Hence \eqref{eqp: sh contains the remaining of pareto} implies that $c' \in \paretoh$. But then $c' \succ c$ contradicts \eqref{eqp: paretoh is uncomparable to sh}, which concludes the RAA.

Note that \eqref{eqp: c in pareto} and \eqref{eqp: paretoh is uncomparable to sh} implies
\begin{equation}\label{eqp: c in pareto_new}
c\in \pareto\setminus\paretoh.
\end{equation}

Then, on $\mathbf{E}_1$, $\dal(\C,\delta/K)=c$, and 
\begin{equation}\label{eqp : increase paretois still true}
\paretoh_t \varsubsetneq \paretoh_t \cup \{ c\} = \paretoh_{t+1} \subset \pareto.
\end{equation}
Now by construction we have
\begin{align*}
\Sh_{t+1}&=\{p \in \Sh_t,\quad p \succ c \text{ or } p \parallel c \} \\
&=\{p \in \Sh_t,\quad p \parallel c \}
\end{align*}
since $c \in \pareto.$ Then \eqref{eqp: paretoh is uncomparable to sh} implies that 
\begin{equation}\label{eqp: paretoh is uncomparable to sh still true}
\forall p \in \paretoh_{t+1},\quad \forall q \in \Sh_{t+1},\quad 
 p \parallel q.
\end{equation}
Finally, we prove with RAA that
\begin{equation}\label{eqp: sh contains the remaining of pareto still true}
\pareto \subset \paretoh_{t+1} \cup \Sh_{t+1}
\end{equation}
Let $p\in \pareto$ such that $p\notin \paretoh_{t+1} \cup \Sh_{t+1}$. \eqref{eqp: sh contains the remaining of pareto} implies that $p\in \paretoh_{t} \cup \Sh_{t}$. Since $\paretoh_{t+1} \supset \paretoh_{t}$, we have $p \in \Sh_{t+1} \setminus \Sh_{t}$. Then, by definition of $\Sh_{t+1}$, we have $c \succ p$, which contradicts $p \in \pareto$ and conclude the RAA.

Finally, \eqref{eqp : increase paretois still true}\eqref{eqp: paretoh is uncomparable to sh still true} and \eqref{eqp: sh contains the remaining of pareto still true} implies that the invariant is true at the beginning of the loop $t+2$.

When the algorithm stops, we have $\Sh= \emptyset$, hence \eqref{eqp : increase pareto} and \eqref{eqp: sh contains the remaining of pareto} implies that 
$$ \paretoh \subset \pareto \subset \paretoh \cup \emptyset = \paretoh$$
that is to say $\paretoh=\pareto$. Hence on $\mathbf{E}_1$, Algorithm \ref{algochain} reaches the correct conclusion.

A consequence of \eqref{eqp: c in pareto_new} is that $\paretoh_t$ increases by exactly one element at each iteration of of the while loop, and thus the $\dal$ is called exactly $\vert \pareto \vert$ times. Hence, if we denote by $\C_t$ the chain constructed at the loop $t$, 
\begin{align*}
\proba(\mathbf{E}_1^C) & \le \sum_{ t=1 } ^{ \vert \pareto \vert } \proba\left(\{ \dal(\C_t, \delta /K )\text{ failed } \}\right)\\
& \le  \sum_{ c \in \pareto } \delta / K \le \delta,
\end{align*}

Additionally, the number of additional comparisons required to build all the chains is upper bounded by $K^2$, as all pair of elements have to be compared at most once. Hence, the upper bound of $\mathcal{T}$ is derived from the fact due to \eqref{eqp: c in pareto_new}, at each iteration, a chain with a different element of $c \in \pareto$ is considered.


\subsection*{Proof of Corollary 3.1}

Let $\C_t$ be the chain considered by Algorithm 1 during the loop t, and we denote by $c_t$ the maximal element of $\C_t$, which is  the unique element of $\pareto \cap \C_t$ (consequence of \eqref{eqp: c in pareto_new}).
Theorem 2 from \\ \citep{yue2012if} implies that in this case,
\begin{align*}
\mathcal{T} \left( \dal  \left( \C_t, \delta / K \right) \right) &\le \O \left( \vert C_t  \vert \frac{\log( \vert C_t \vert ^2 K /\delta )}{(\min_{c'\in \C_t } \gamma_{c_t c'} )^2 }  \right)\\
&\le \O \left( \vert C_t  \vert \frac{\log(  K^3 /\delta )}{(\min_{c \in \pareto, c'\in \S, c\succ c'} \gamma_{cc'} )^2  } \right).
\end{align*}
Using that by construction, $\forall t<t', \C_t \cap \C_t' =\emptyset$, and $\bigcup_{t} \C_t = \S$, we have
\begin{align*}
\sum_{t=1}^{\vert \pareto \vert} \mathcal{T} \left( \dal  \left( \C_t, \delta / K \right) \right)
&\le \O \left( \sum_{t=1}^{\vert \pareto \vert} \vert C_t  \vert \frac{\log(  K^3 /\delta )}{(\min_{c \in \pareto, c'\in \S, c\succ c'} \gamma_{cc'} )^2  } \right)\\
&\le \O \left( K \frac{\log(  K^3 /\delta )}{(\min_{c \in \pareto, c'\in \S, c\succ c'} \gamma_{cc'} )^2  } \right)
\end{align*}
Hence the conclusion.


\subsection*{Proof of Proposition 3.7}
\underline{\textbf{Case $\dal$ = Algorithm 3}}\\
In this setting, the arms are compared using decoys.

We are going to proceed as in the proof of Theorem 1.

Let $\mathbf{E}_1$ be the event where during the execution of Algorithm 5, each call to Algorithm 3 returns the correct answer. We are going to prove the following invariant for the principal loop of the Algorithm on $\mathbf{E}_1$.

\textbf{Invariant:}  At the iteration $n$, Let $\S_t^n$ the set of element of $\S_t$ already considered, $\paretoh^n$ the current set of pivot. Then 
\begin{align}
\forall c' \in \S_t^n \quad \exists c \in \paretoh^n,  \quad & c \succe c'\label{eqp2: ph is pseudo pareto} \\
\forall c,c' \in \paretoh_t, \quad & c \parallel c' \label{eqp2: ph is antichain}
\end{align}

It is easy to see that the invariant is true at the beginning of the algorithm because 
$\S_t^0= \paretoh^0$ and $\vert \paretoh^0 \vert = 1.$

Suppose that the invariant is true at the $n$-th iteration. Let $p$ be the new element considered, i.e. $\S^{n+1}_{t} = \S^n_t \cup \{ p \}$, and define $\Gamma_-^p\doteq \{ q \in \paretoh^n, p \succ q \}$
\begin{enumerate}
\item \textbf{Case 1.} $\exists q \in \paretoh^n$ s.t. $q\succ p$. In this case, $\paretoh^{n+1}=\paretoh^n \setminus \Gamma_-^p$, hence \eqref{eqp2: ph is antichain} at iteration $n$ immediatly implies \eqref{eqp2: ph is antichain} at iteration $n+1.$ Since $q \succ p$, we have $\forall q' \in \Gamma_-^p$, we have $q \succ q'$ by transitivity.Hence \eqref{eqp2: ph is pseudo pareto} at iteration n    implies \eqref{eqp2: ph is pseudo pareto} at iteration n+1.
\item \textbf{Case 2.}  $\forall q \in \paretoh^n$, $p\succ q$ or $p \parallel q$.  Then $$\paretoh^{n+1}=\{p\} \cup \paretoh^n \setminus \Gamma_-^p,$$ 
and is it easy to see that \eqref{eqp2: ph is pseudo pareto} is still true iteration $n+1$. Now we are going to prove that \eqref{eqp2: ph is antichain} is still true by RAA. Assume that  $\exists q \in \paretoh^{n+1}$ s.t. $q$ is comparable to $p$. By definition of $\Gamma_-^p$, it implies that $q \succ p$, which contradicts the initial assumption of the case.
\end{enumerate}

After the last iteration n, we have $\S^{n+1}_t = \S_t$, since all the elements have been examined.
We now prove by RAA that the invariant implies that $\paretoh^{n+1}=\pareto.$  We drop the $n+1$ in $\paretoh^{n+1}$ for the sake of alleviating the notations.

Suppose that $\paretoh  \not\subset \pareto$ and let $p \in \paretoh \setminus \pareto$. Since $p \notin \pareto, \exists q \in \pareto $ s.t. $ q \succ p$. If $q \in \paretoh$, \eqref{eqp2: ph is antichain} is contracted. Then $q \notin \paretoh.$ Hence $ q \succ p$ contradicts \eqref{eqp2: ph is pseudo pareto}. So $\paretoh \subset \pareto$.

Now assume that $\pareto  \not\subset \paretoh$ and let $p \in \pareto \setminus \paretoh$. Since $ p \notin \paretoh,$ \eqref{eqp2: ph is pseudo pareto} implies that $\exists q \in \paretoh$ s.t. $q \succe p$. Since $p \notin \paretoh$ and $q\in \paretoh$, $q\neq p$ hence $q \succ p$, which contradicts $p\in \pareto$. So $\pareto \subset \paretoh.$ Hence $\paretoh = \pareto.$

A consequence of \eqref{eqp2: ph is antichain} is that at each step, $\paretoh^n$ is an antichain. Since during the execution of the algorithm all the elements of $S_t$ are compared to all the element of the current $\paretoh$, the algorithm do at most 

$$ \vert \S_t \vert \max_n \vert \paretoh^n \vert \le \vert \S_t \vert \text{width}(\S_t)$$ comparisons, and as a consequence
\begin{align*}
\proba( \mathbf{E}_1^C ) \le  \vert \S_t \vert \text{width}(\S_t) \frac{\delta}{\vert \S_t \vert^2} \le \delta.
\end{align*}

The upper bound on the number of comparisons results with the same remark combined with Proposition 3.5.

\underline{\textbf{Case $\dal$ = Algorithm 2}.}\\
During the epochs $t<N$, the arms are compared directly to each other, i.e. Algorithm 2 is used for comparisons purpose.
We first tackle the case $t=1$, i.e. the first epoch, since in this case, there is no previous observations, and thus no negative term in the upper bound.\\
\underline{\textbf{Case $t=1$}}. The proof for $t=1$ unfolds similarly to the previous case, with a different invariant.

Let $\mathbf{E}_1$ be the event where during the execution of Algorithm 5, each call to Algorithm 2 returns the correct answer e.g. $i \succ j$ (resp $j \succ i$) if and only if $\gamma_{ij}> \ep$ (resp $\gamma_{ji}> \ep$). We are going to prove the following invariant for the principal loop of the Algorithm on $\mathbf{E}_1$.

\textbf{Invariant:}  At the iteration $n$, Let $\S_t^n$ the subset of element of $\S_t$ already considered, $\paretoh^n$ the current set of pivot. Then 
\begin{align}
\forall c' \in \S_t^n \quad \exists c \in \paretoh^n,  \quad & c \succe c'\label{eqp3: ph is pseudo pareto} \\
\forall c,c' \in \paretoh_t, \quad & c \pareps c'  \label{eqp3: ph is ep antichain}
\end{align}

It is easy to see that the invariant is true at the beginning of the algorithm because 
$\S_t^0= \paretoh^0$ and $\vert \paretoh^0 \vert = 1.$

Suppose that the invariant is true at the $n$-th iteration. Let $p$ be the new element considered, i.e. $\S^{n+1}_{t} = \S^n_t \cup \{ p \}.$ 
\begin{enumerate}
\item \textbf{Case 1.} $\exists q \in \paretoh^n$ s.t. $q\succ p$ and $\gamma_{qp}>\ep$. In this case, $\paretoh^{n+1}=\paretoh^n \setminus \Gamma_-^p$, hence \eqref{eqp3: ph is  ep antichain} at iteration $n$ immediatly implies \eqref{eqp3: ph is ep antichain} at iteration $n+1.$ Since $q \succ p$, we have $\forall q' \in \Gamma_-^p$, we have $q \succ q'$ by transitivity. Hence \eqref{eqp3: ph is pseudo pareto} at iteration $n$    implies \eqref{eqp3: ph is pseudo pareto} at iteration $n+1$.
\item \textbf{Case 2.}  $\forall q \in \paretoh^n$, ($p\succ q$ and $\gamma_{pq}>\ep$ ) or $\vert \gamma_{pq} \vert<\ep$.  Then $$\paretoh^{n+1}=\{p\} \cup \paretoh^n \setminus \Gamma_-^p,$$ 
and is it easy to see that \eqref{eqp2: ph is pseudo pareto} is still true iteration $n+1$. Now we are going to prove that \eqref{eqp2: ph is antichain} is still true by RAA. Assume that  $\exists q \in \paretoh^{n+1}$ s.t. $q$ is comparable to $p$ and $\vert \gamma_{qp}\vert >\ep$. By definition of $\Gamma_-^p$, it implies that $q \succ p$, and the order compatibility of the poset implies that $\gamma_{qp}>\ep$ which contradicts the initial assumption of the case.
\end{enumerate}

After the last iteration n, we have $\S^{n+1}_t = \S_t$, since all the elements have been examined.
We now prove by RAA that the invariant implies that $\paretoh^{n+1}$ is an $\ep$-approximation of $\pareto$.  We drop the $n+1$ in $\paretoh^{n+1}$ for the sake of alleviating the notations.

Now assume that $\pareto  \not\subset \paretoh$ and let $p \in \pareto \setminus \paretoh$. Since $ p \notin \paretoh,$ \eqref{eqp2: ph is pseudo pareto} implies that $\exists q \in \paretoh$ s.t. $q \succe p$. Since $p \notin \paretoh$ and $q\in \paretoh$, $q\neq p$ hence $q \succ p$, which contradicts $p\in \pareto$. So $\pareto \subset \paretoh.$

Now suppose that $\exists q \in \paretoh$  such that $\exists p \in \pareto $ s.t. $p \succ q$ and  $\gamma_{pq} > \ep.$ Since $\pareto \subset \paretoh$, we have $p \in \paretoh$ and thus $\gamma_{pq} > \ep.$ contradicts \eqref{eqp3: ph is ep antichain}. Hence $\paretoh$ is a $\ep$-approximation of $\pareto.$

A consequence of \eqref{eqp3: ph is ep antichain} is that at each step, $\paretoh^n$ is an $\ep$-antichain. Since during the execution of the algorithm all the elements of $S_t$ are compared to all the element of the current $\paretoh$, the algorithm do at most 

$$ \vert \S_t \vert \max_n \vert \paretoh^n \vert \le \vert \S_t \vert \width_\ep(\S_t)$$ comparisons, and as a consequence
\begin{align*}
\proba( \mathbf{E}_1^C ) \le  \vert \S_t \vert \width_\ep(\S_t) \frac{\delta}{\vert \S_t \vert^2} \le \delta.
\end{align*}

The upper bound on the number of comparisons results with the same remark combined with the fact that Algorithm 2 uses Hoeffding inequality.

\underline{\textbf{Case $1<t<N$.}}\\
To conclude, we only need to lower bound the number of previous comparisons that can be reused.
Once again, consider the event $\mathbf{E}_1$ be the event where during the execution of Algorithm 5, each call to Algorithm 2 returns the correct answer e.g. $i \succ j$ (resp $j \succ i$) if and only if $\gamma_{ij}> \ep$ (resp $\gamma_{ji}> \ep$).
Let $i$ and $j \in \S_t$ such that  $i$ and $j$ are compared at epoch $t$ (i.e. during the call number $t$ of Algorithm \ref{algopeeling}).
Note that $S_t=\paretoh^n_{t-1}$ and let assume without any loss of generality that $i$ was added before $j$ into $\paretoh^n_{t-1}$. Since $i$ is a pivot at the end of the epoch $t-1$, it was compared to all the arm considered after $i$, including $j$.

Since both $i$ and $j$ are pivots at the end of epoch $t-1$, it implies that 
$i \parallel j$ or $\gamma_{ij}<\ep_{t-1}$. In both cases, Algorithm the algorithm does exactly $\frac{\log(K^2 /\delta')}{\ep_{t-1}^2}$ comparisons to reach this conclusion. The result follows from the reuse of information.


\subsection*{Proof of Theorem 2.}

First note that if $\pareto'$ is a $\ep$- approximation of $\pareto$, then $\pareto \subset \pareto'$. Additionally, it is easy to see that if $\S$ is a poset and $\pareto$ is its Pareto set, then $\forall \S'\subset \S$ such that $\pareto \subset \S'$, the Pareto front of $\S'$ is $\pareto.$

Hence, Proposition 3.7  implies that with probability at least $1-N\delta/N = 1- \delta$, Algorithm 4 returns the pareto front of $\S$.
 in at most T comparisons, where 
\begin{equation*}
\begin{aligned}
T &\le  2\sum_{t=1}^{N-1} \vert \S_t \vert \width_{\ep_t}(\S_t) {\log(2N\vert \S_t \vert^2 /\delta) }\left(\frac{1}{\ep_t^2}- \mathbf{1}_{t>1}\frac{1}{\ep_{t-1}^2} \right) + 4\vert \S_N \vert \text{\textbf{width}}(\S_N) \frac{\log(4N\vert \S_N \vert^2/\delta) }{\Delta^2} \\
&\le  2\sum_{t=1}^{N-2}\frac{1}{\ep_t^2} \left( \vert \S_t \vert \width_{\ep_t}(\S_t) {\log(2N\vert \S_t \vert^2 /\delta) } - \vert \S_{t+1} \vert \width_{\ep_{t+1}}(\S_{t+1}) {\log(2N\vert \S_{t+1} \vert^2 /\delta) }  \right) \\
& \quad + \frac{2}{\ep_{N-1}^2} \vert \S_{N-1} \vert \width_{\ep_{N-1}}(\S_{N-1}) {\log(2N\vert \S_{N-1} \vert^2 /\delta) } + 4\vert \S_N \vert \text{\textbf{width}}(\S_N) \frac{\log(4N\vert \S_N \vert^2/\delta) }{\Delta^2} 
\end{aligned}
\end{equation*}

where the second inequality is obtained by rearranging the sum.
Now,  by hypothesis we have 
$$\ep_t > \ep_{N-1} \ge \Delta \sqrt{\frac{\vert \S \vert}{\width(\S)}}$$
Hence, since the $\vert \S_t \vert \width_{\ep_t}(\S_t) {\log(N\vert \S_t \vert^2 /\delta) }$ is decreasing in $t$ we have

\begin{equation*}
\begin{aligned}
T &\le  2\sum_{t=1}^{N-2}\frac{\width(\S)}{\vert \S \vert \Delta^2} \left( \vert \S_t \vert \width_{\ep_t}(\S_t) {\log(2N\vert \S_t \vert^2 /\delta) }  - \vert \S_{t+1} \vert \width_{\ep_{t+1}}(\S_{t+1}) {\log(2N\vert \S_{t+1} \vert^2 /\delta) }  \right) \\
& \quad + \frac{2 \width(\S)}{\vert \S \vert \Delta^2} \vert \S_{N-1} \vert \width_{\ep_{N-1}}(\S_{N-1}) {\log(2N\vert \S_{N-1} \vert^2 /\delta) }+ 4\vert \S_N \vert \text{\textbf{width}}(\S_N) \frac{\log(N\vert \S_N \vert^2/\delta) }{\Delta^2} \\
&\le  \frac{2}{\Delta^2} \vert \S \vert \frac{\width_{\ep_{1}}(\S)}{\vert \S \vert } \width(S){\log(2N\vert \S \vert^2 /\delta) } + 4\vert \S_N \vert \text{\textbf{width}}(\S_N) \frac{\log(4N\vert \S_N \vert^2/\delta) }{\Delta^2} \\
&\le  \O\left( K  \width(\S) \frac{\log(N K^2/\delta) }{\Delta^2}\right) 
\end{aligned}
\end{equation*}

\subsection*{Proof of Theorem 3.}

We know from Proposition \ref{prop:unchainedregretfinal}, with probability at least $1- \delta$, the algorithm does not reach an incorrect result the comparison. For the rest of the proof, we restrict ourselves to this event.

First we consider the regret $\reg_0$ induced by the peeling process.

Let $i$ be an arm, and $N_i$ be the last peeling step before $i_p$ is eliminated. If $i$ is not eliminated at the end of the peeling, then we set $N_i=N-1$.  In other words,

\begin{align*}
 N_i & = \max \{ 1 \le t \le N-1, \quad i \in \paretoh_t \}  \\
 & = \min \left( \lceil \frac{\log(\Delta_i)}{\log(\gamma)} \rceil, N-1 \right).
 \end{align*}
 
Let $j\le N_i$. During the $j$-th phase of peeling, the arm $i$ is compared to at most $\vert S_j -1 \vert$ other arms. Hence, with the same argument as in Proposition \ref{prop:unchainedregretfinal}, we have
 
 $$ \reg_0 \le 2 \sum_{i=1}^K \Delta_i  \sum_{t=1}^{N_i} \vert \S_t \vert {\log(2N\vert \S_t \vert^2 /\delta) }\left(\frac{1}{\ep_t^2}- \mathbf{1}_{t>1}\frac{1}{\ep_{t-1}^2} \right). $$
 
 Now since $\ep_t < \ep_{t-1}$, and  by hypothesis, $\vert S_t \vert \le \alpha^{t-1} K$, we have
 
 \begin{align*}
\reg_0 & \le 2K \log( \frac{2N K^2}{\delta}) \sum_{i=1}^K \Delta_i  \sum_{t=1}^{N_i} \alpha^{t-1} \left(\frac{1}{\ep_t^2}- \mathbf{1}_{t>1}\frac{1}{\ep_{t-1}^2} \right) \\
&\le 2K \log( \frac{2N K^2}{\delta})\sum_{i=1}^K \Delta_i  \left( \sum_{t=1}^{N_i-1} \frac{\alpha ^{t-1}}{\ep_t^2} \left( 1 - \alpha\right) + \frac{\alpha ^{N_i-1}}{\ep_{N_i}^2} \right).
 \end{align*}
 
 Since by construction, we have $\ep_{t+1} = \gamma \ep_t$, then
 
  \begin{align*}
\reg_0 & \le 2K \log( \frac{2N K^2}{\delta})\sum_{i=1}^K \frac{\Delta_i}{\ep_{N_i}^2}  \left( \sum_{t=1}^{N_i-1} \gamma^{2 (N_i -t)} \alpha ^{t-1}\left( 1 - \alpha\right) + \alpha ^{N_i-1}\right)\\
& \le 2K \log( \frac{2N K^2}{\delta})\sum_{i=1}^K \frac{\Delta_i}{\ep_{N_i}^2}  \left( \gamma^{2 (N_i-1)} \sum_{t=1}^{N_i-1} (\frac{\alpha }{\gamma^2 })^{t-1}  \left( 1 - \alpha\right) + \alpha ^{N_i-1}\right)\\
& \le \frac{2K}{\gamma^2} \log( \frac{2N K^2}{\delta})\sum_{i=1}^K \frac{1}{\Delta_i}  \left( \gamma^{2 (N_i-1)} \sum_{t=1}^{N_i-1} (\frac{\alpha }{\gamma^2 })^{t-1}  \left( 1 - \alpha\right) + \alpha ^{N_i-1}\right),
 \end{align*}
 since by definition of $N_i$, we have $\Delta_i < \ep_{N_i-1} = \gamma \ep_{N_i}$. Now we have to consider two cases.
 
\noindent  \textbf{Case $\gamma^2 \neq  \alpha$:}

  \begin{align*}
\reg_0 & \le \frac{2K}{\gamma^2} \log( \frac{2N K^2}{\delta})\sum_{i=1}^K \frac{1}{\Delta_i}    \left( \gamma^{2 (N_i-1)}    \left( 1 - \alpha\right) \frac{1-(\alpha/\gamma^2 )^{N_i-1}}{1- (\alpha / \gamma^2 )} + \alpha ^{N_i-1}\right)\\
& \le \frac{2K}{\gamma^2} \log( \frac{2N K^2}{\delta})\sum_{i=1}^K \frac{1}{\Delta_i}   \frac{\gamma^{2N_i}(1-\alpha) + \alpha^{N_i}(\gamma^2 -1)  }{\gamma^2 - \alpha}\\
& \le \frac{2K}{\gamma^2} \log( \frac{2N K^2}{\delta})\sum_{i=1}^K \frac{1}{\Delta_i}   C_{\alpha,\gamma}(N_i),
 \end{align*}
and

\noindent  \textbf{Case $\gamma^2 =  \alpha$:}

  \begin{align*}
\reg_0 & \le \frac{2K}{\gamma^2} \log( \frac{2N K^2}{\delta})\sum_{i=1}^K \frac{1}{\Delta_i}   \left( \alpha^{N_i-1}    \left( 1 - \alpha\right) (N_i -1) + \alpha ^{N_i-1}\right)\\
& \le \frac{2K}{\gamma^2} \log( \frac{2N K^2}{\delta})\sum_{i=1}^K \frac{1}{\Delta_i}     \alpha^{N_i-1}  N_i \\
& \le \frac{2K}{\gamma^2} \log( \frac{2N K^2}{\delta})\sum_{i=1}^K \frac{1}{\Delta_i}   C_{\alpha,\gamma}(N_i),
 \end{align*}
hence the conclusion.

Now let $\reg_1$ be the regret generated by the decoy step. To reach this step, an arm $i$ must be such that  $\Delta_i< \ep_{N-1}$. If $i \in \pareto$, then pulling the arm $i$ produces no regret. Otherwise, it is easy to see that the  arm is compared to at most $\width(S)$ other arms before being eliminated.

  \begin{align*}
\reg_1 & \le \width(S) \log( \frac{2N K^2}{\delta})\sum_{i, \Delta_i < \ep_{N-1}, i \notin \pareto}  \frac{\Delta_i}{\Delta^2}  \\ 
& \le (\frac{\ep_{N-1}}{\Delta})^2   \width(S) \log( \frac{2N K^2}{\delta})\sum_{i, \Delta_i < \ep_{N-1}, i \notin \pareto}  \frac{1}{\Delta_i}\\
& \le K  \width(S) \log( \frac{2N K^2}{\delta})\sum_{i, \Delta_i < \ep_{N-1}, i \notin \pareto}  \frac{1}{\Delta_i}.
 \end{align*}

\end{document}